\documentclass[11pt]{article}
\usepackage[margin=1in]{geometry}

\usepackage{amsmath}
\usepackage{amsthm}
\usepackage{amssymb}
\usepackage{algorithm}
\usepackage{color}
\usepackage[english]{babel}
\usepackage{graphicx}
\usepackage{grffile}
\usepackage{wrapfig,epsfig}
\usepackage{epstopdf}
\usepackage{url}
\usepackage{color}
\usepackage{epstopdf}
\usepackage{algpseudocode}
\usepackage[T1]{fontenc}
\usepackage{bbm}
\usepackage{comment}
\usepackage{dsfont}
\usepackage{thm-restate}
\usepackage{bm}
\usepackage{tcolorbox}
\usepackage{subcaption}
\usepackage{enumitem}

\usepackage{mathtools}

\newtheorem{theorem}{Theorem}[section]
 
\newtheorem{lemma}[theorem]{Lemma}

\newtheorem{conjecture}[theorem]{Conjecture}

\newtheorem{remark}[theorem]{Remark}

\newcommand{\eps}{\epsilon}
\DeclareMathOperator{\poly}{poly}
\DeclareMathOperator{\init}{init}
\DeclareMathOperator{\unif}{unif}
\newcommand{\mA}{\mathcal{A}}
\newcommand{\mS}{\mathcal{S}}
\newcommand{\mC}{\mathcal{C}}
\newcommand{\mB}{\mathcal{B}}
\newcommand{\wt}{\widetilde}
\newcommand{\wh}{\widehat}
\newcommand{\mr}{\mathsf{r}}

\newcommand{\me}{\mathsf{e}}
\newcommand{\mb}{\mathsf{b}}
\newcommand{\mt}{\mathsf{t}}
\newcommand{\mmp}{\mathsf{p}}
\newcommand{\SETH}{\mathsf{SETH}}
\newcommand{\mip}{\textsc{Max-IP}}

\newcommand{\mH}{\mathcal{H}}

\DeclareMathOperator*{\E}{{\mathbb{E}}}

\newcommand{\Binghui}[1]{{\color{blue}[Binghui: #1]}}

\title{The complexity of non-stationary reinforcement learning}
\author{  
 Christos Papadimitriou\\ Columbia University\\  \texttt{christos@columbia.edu} \and
 Binghui Peng \\ Columbia University \\ \texttt{bp2601@columbia.edu} 
}

\begin{document}
\maketitle

\begin{abstract}
The problem of continual learning in the domain of reinforcement learning, often called non-stationary reinforcement learning, has been identified as an important challenge to the application of reinforcement learning. We prove a worst-case complexity result, which we believe captures this challenge: Modifying the probabilities or the reward of a single state-action pair in a reinforcement learning problem requires an amount of time almost as large as the number of states  in order to keep the value function up to date, unless the strong exponential time hypothesis (SETH) is false; SETH is a widely accepted strengthening of the P $\neq$ NP conjecture.  Recall that the number of states in current applications of reinforcement learning is typically astronomical.  In contrast, we show that just {\em adding} a new state-action pair is considerably easier to implement.
\end{abstract}

\setcounter{page}{0}
\thispagestyle{empty}
\newpage

\section{Introduction}
\label{sec:intro}
Reinforcement learning (RL) \cite{sutton2018reinforcement}, the branch of machine learning seeking to create machines that react to a changing environment so as to maximize long-term utility, has recently seen tremendous advances through deep learning \cite{silver2017mastering,silver2018general}, as well as a vast expansion of its applicability and reach to many application domains, including board games, robotics, self-driving cars, control, and many more. As with most aspects of deep learning, one of the most important current challenges in deep RL lies in handling situations in which the model undergoes changes.  Variably called {\em non-stationary RL, continual RL, multi-task RL, or life-long RL}, the problem of enabling RL to react effectively and gracefully to sequences of changes in the underlying Markov model has been identified as an important open problem in practice, see the prior work subsection for many references, and \cite{khetarpal2022towards} for a recent survey of the challenge and the available remedies.

When it becomes clear that a particular computational problem is difficult, the field of {\em computational complexity} \cite{papadimitriou1998combinatorial,papadimitriou2003computational,arora2009computational} comes into play: the search for mathematical obstacles to the efficient solution of problems.  The identification of such obstacles is often informative about the kinds of remedies one needs to apply to the problem.  As far as we can tell, the computational complexity of non-stationary RL (NSRL) has not been explored in the past; in contrast, see \cite{chen2022memory} for an example of recent progress in identifying complexity obstacles in continual learning of {\em classification} tasks.

{\em In this paper, we initiate the analysis of NSRL from the standpoint of computational complexity.} We consider finite horizon MDPs --- it is easy to see that our results can be extended very easily to infinite horizon MDPs.  We ask the following question:  Suppose that we have already solved a finite-horizon MDP, and that the MDP changes in some small way; how difficult is it to modify the solution?  If the solution we want to update is an explicit mapping from states to actions, then it is not hard to see that this is hopeless: a small local change can cause a large proportion of the values of this map to change\footnote{For example, consider the extreme example where a change in an action increases the value of the next state, and this in turn changes the optimum actions in almost all other states.}.
However, recall that deep RL is not about computing explicitly the optimum solution of the problem; it is about maintaining an implicit representation of a good {\em approximation} of the optimum solution. 
An efficient NSRL algorithm only needs to update the value or policy efficiently when visiting the state.
Our results address precisely this aspect of the difficulty.  

We consider elementary local changes to the RL problem, which we believe capture well the nature of the NSRL problem: We choose a state-action pair  and we modify somehow its parameters: the reward, and the transition probability distribution.  Our results hold for the most elementary possible change:  We only modify two transition probabilities in this state-action pair.  (Notice that it is impossible to modify only one probability in a distribution...)  We prove that, under widely accepted complexity assumptions to be explained soon, the amount of computation needed to update an $\epsilon$-optimal value approximation in the face of such an elementary change is, in the worst case, comparable to the number of states (the precise result is stated below).  Since in the problems currently solved by deep RL the number of states of the underlying MDP is typically astronomical, such a prediction is bad indeed --- it means that we essentially have to start all over because of a small change.   Now, in deep learning we know well that a worst-case result is never the last word on the difficulty of a problem.  However, we believe that an alarming worst-case result, established for an aspect of the problem which has been identified in practice to be a challenge, is a warning sign which may yield valuable hints about the corrective action that needs to be taken in order to overcome the current bottleneck.

We complement this lower bound with a positive result for a different kind of change: {\em adding a new} action to a state.  It turns out that this is a simpler problem, and an $\epsilon$-approximate solution can be updated in time polynomial in $\tfrac{1}{\eps}$ and the horizon.  



\subsection*{Related work}

Non-stationary MDPs have been studied extensively in recent years from the point of view of dynamic regret \cite{auer2008near,dick2014online,ortner2020variational,cheung2020reinforcement,zhounonstationary,li2019online,touati2020efficient,wei2021non,domingues2021kernel,mao2021near}; In \cite{mao2021near} an algorithm with total regret $\wt{O}(S^{1/3}A^{1/3}\Delta^{1/3}HT^{2/3})$ is provided, where $T$ is the total number of iteration, $\Delta$ is the variational budget that measures the total change of MDP.  
Another line of work focuses on the statistical problem of detecting the changes in the environment, see \cite{da2006dealing,banerjee2017quickest,padakandla2020reinforcement,ornik2021learning} , and \cite{padakandla2021survey,khetarpal2022towards} for recent surveys; in particular,  \cite{padakandla2021survey} mentions the computational difficulty of the change problem addressed in this paper.   Several approaches to NSRL  --- e.g \cite{wei2021non,mao2021near} --- resort to {\em restarting} the learning process if enough change has accumulated; our results suggest that, indeed, restarting may be preferable to updating. Additional literature can be found at Appendix \ref{sec:relate-app}.

\subsection*{A brief overview of the main result}
Our main result (Theorem \ref{thm:fully}) states that, in the worst case, an elementary change in an MDP --- just updating two transition probabilities in one action at one state of the MDP --- requires time $(SAH)^{1-o(1)}$, where $S$ is the number of states, $A$ is the number of action and $H$ is the horizon. The proof is based on the Strong Exponential Time Hypothesis ($\SETH$), which is a central conjecture in complexity, a refinement of $\mathsf{P}\neq\mathsf{NP}$.  $\SETH$ has many applications in graph algorithms \cite{roditty2013fast,abboud2014popular,backurs2018towards,li2021settling,dalirrooyfard2022hardness}, edit distance \cite{backurs2015edit}, nearest neighbor search \cite{rubinstein2018hardness}, kernel estimation  \cite{charikar17hash,alman20algorithm} and many other domain; see \cite{rubinstein2019seth} for a comprehensive survey.  $\SETH$ states that, if the $k$-SAT problem (the Boolean satisfiability problem when each clause contains at most $k$ literals) can be solved in time $O(2^{c_kn})$, then the limit of $c_k$ as $k$ grows is one.  Our work is based on the important result of \cite{abboud2017distributed} on the hardness, under $\SETH$, of approximating the bichromatic Maximum Inner Product ($\mip$) problem. Subsequent work has improved the approximation parameter \cite{rubinstein2018hardness,chen2020hardness} and applied the technique to the Dynamic Coverage problem \cite{abboud2019dynamic,peng2021dynamic}.

We reduce from the $\mip$ problem, where we are given two collections of sets $B_1, \ldots, B_n$ and $C_1, \ldots, C_n$, over a small universe $[m]$ with $m = n^{o(1)}$. It is known from \cite{abboud2017distributed} that it is hard to distinguish between the following two scenaria: (a) $B_i \subseteq C_j$ for some $i, j \in [n]$, and (b) $|B_i \cap C_j| \leq |C_j|/2^{\log(n)^{1-o(1)}}$ for all $i, j \in [n]$.  That is, it is hard to tell the difference between the case of a complete containment and the case of tiny intersections.  The first step of our reduction is to construct a finite-horizon MDP such that the state of the first step ($h=1$) corresponds to the sets $B_1, \ldots, B_n$ and the state of the second step ($h=2$) corresponds to the universe $[m]$. The state of the second step has either high reward or low reward, depending on the time $t$.
By applying a sequence of changes to the state-action transition in the second step, based on the structure of the sets $C_1, \ldots, C_n$, one obtains a reduction from $\mip$ establishing a lower bound of $S^{2-o(1)}$ for this sequence.  However, since this sequence is of length $S^{1+o(1)}$ (because of the size of the $C_j$ sets), we obtain an $\Omega(S^{1-o(1)})$ amortized  
lower bound for each step of the sequence, and this complete the reduction to the NSRL problem. 

The construction so far yields an approximation $\eps$ that is very small (about $S^{-o(1)}$).  We need a second stage of our construction to amplify $\eps$ to some constant such as $0.1$. This is achieved by stacking multiple layers of the basic construction outlined above. 
Finally, by spreading the state-actions across multiple steps, we improve the lower bound to $\Omega((SAH)^{1 - o(1)})$.

The complete proof can be found at Section \ref{sec:fully}.  

\section{Preliminary Definitions}
\label{sec:preliminary}

Here we shall define non-stationary MDPs.   Let $\mS$ be a state space ($|\mS| = S$), $\mA$ an action space ($|\mA| = A$), $H \in \mathbb{Z}_{+}$ the planning horizon.  Next let $T \in \mathbb{Z}_{+}$ be the number of {\em rounds:} The intention is that the MDP will repeated $T$ times, with action parameters changed between rounds.  

A {\em non-stationary finite horizon MDP} is a set of $T$ MDPs $(\{\mS_h, \mA_h, P_{t, h},r_{t, h}\}_{t\in [T], h\in [H]}, s_{\init})$.  $\mS_h \subseteq \mS$ is the state space and $\mA_h \subseteq \mA$ is the action space at the $h$-th step ($h\in [H]$), and
$P_{t, h}: \mS_h\times \mA_h \rightarrow \Delta_{\mS_{h+1}}$ is the transition function, where $\Delta_{\mS_h}$ is the set of all probability distributions over $\mS_h$, and $r_{t, h}: \mS_h\times \mA_h \rightarrow [0,1]$ is the reward function at the $h$-th step of the $t$-th round ($h \in [H], t\in [T]$).
We use $s_{\init} \in \mS_1$ to denote the initial state.

We focus on deterministic non-stationary policies $\pi= (\pi_1, \ldots, \pi_T)$, though our results apply for randomized policies as well.
Let $\pi_{t} = (\pi_{t,1}, \ldots, \pi_{t, H})$ be the policy of the $t$-th round ($t\in [T]$) and $\pi_{t, h} : \mS_h\rightarrow \mA_h$ ($h\in [H]$) be the decision at the $h$-th step.
Given a policy $\pi$, the $Q$-value of a state-action pair $(s, a) \in \mS_h\times \mA_h$ at the $t$-round can be determined 
\[
Q_{t, h}^{\pi_t}(s, a) = r_{t, h}(s, a) + \E\left[ \sum_{\ell=h+1}^{H} r_{t, h}(s_{t,\ell}, \pi_{t,\ell}(s_{t,\ell})) \mid s_{t,h}= s, a_{t, h}= a \right] \quad \forall s \in \mS_{h} , a \in \mA_{h}
\]
and the $V$-value 
\[
V_{t, h}^{\pi_t}(s) = \E\left[ \sum_{\ell=h}^{H} r_{t, h}(s_{t,\ell}, \pi_{t,\ell}(s_{t,\ell})) \mid s_{t,h}= s \right] \quad \forall s \in \mS_{t, h}.
\]

Let $\pi_t^{*}$ be the optimal policy at the $t$-round, and $Q^{*}_t$, $V_{t}^{*}$ be its $Q$-value and $V$-value.
The goal is to maintain an $\eps$-approximated value function. In particular, we require the algorithm to maintain an $\eps$-approximated estimation $V_t$, of the initial state $s_{\init}$, such that for all rounds $t\in [T]$,
\begin{align*}
\left|V_t - V_{t,1}^{*}(s_{\init})\right| \leq \eps.
\end{align*}

\paragraph{Updates.} 
All $T$ MDPs of our definition must be solved, one after the other, despite the fact that their parameters change from one round to the next.  The updates are meant to be extremely simple and local: For the $t$-th update, an adversary picks an arbitrary state-action pair $(s_h, a_h) \in \mS_h \times \mA_h$, and changes the transition function from $P_{t-1,h}(s_h, a_h)$ to $P_{t,h}(s_h, a_h)$ and the reward from $r_{t-1,h}(s_h, a_h)$ to $r_{t,h}(s_h, a_h)$.  It also changes the transition function from $P_{t-1,h}(s_h, a_h)$ to $P_{t,h}(s_h, a_h)$, such that these two distributions differ in {\em exactly two states.}  That is, the change in the distribution is the smallest kind imaginable:  {\em Two next states are chosen, and the probability mass of the first is transferred to the second} --- obviously, two discrete distributions cannot differ in exactly one probability.

\begin{remark}
Implementing an elementary change of this kind takes constant time: If the distribution is represented in a tabular form, the two entries of the table are changed. 
It holds similarly when the MDP is accessed via a sampling oracle (a.k.a. the generative model), all one has to do is change the output states.
\end{remark}

\begin{remark}
Notice that the kind of changes we consider is the simplest possible, and yet a sequence of such changes can simulate any desirable change.  Hence, by showing in the next section that even these changes are computationally intractable, we establish that NSRL is intractable. 
\end{remark}

\paragraph{Incremental action change.}
We also consider a different type of NSRL, where the MDP changes only through the introduction of a new action.\footnote{Note the introduction of a new {\em state} can be achieved through a sequence of action additions.}
The setup is similar to NSRL: we assume the initial MDP has $S$ states, $H$ steps but the action set is empty. Then in each round $t \in [T]$, a new state-action pair $(s_h, a_h)$ is added to the MDP, together with its transition probability $P_{h}(s_h, a_h)$ and reward $r_{h}(s_h, a_h)$.
Note the crucial difference with NSRL is that the there is no change occurs on any existing state-action pair.
There are a total of $T$ rounds, and therefore, $T$ state-action pairs at the end.
The incremental action model captures application scenario that involves explorations or expansion of environments (e.g. incremental training).

\section{Hardness of NSRL}
\label{sec:fully}

The main result is the following:
\begin{theorem}[Main result, hardness of NRSL]
\label{thm:fully}
Let $S, A, H, T$ be sufficiently large integers, the horizon $H \geq (SA)^{o(1)}$.
Then, unless $\SETH$ is false, there is no algorithm with amortized runtime $O((SAH)^{1-o(1)})$ per update that can approximate the optimal value of a non-stationary MDP over a sequence of $T$ updates. In particular, any algorithm with better runtime fails to distinguish between these two cases:
\begin{itemize}
\item The optimal policy has value at least $\frac{H}{4}$ at some round $t \in [T]$;
\item The optimal policy has value at most $\frac{H}{100}$ for all $T$ rounds.
\end{itemize}
\end{theorem}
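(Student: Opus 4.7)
The plan is to reduce from the bichromatic Max-Inner-Product ($\mip$) problem and invoke the $\SETH$-based hardness of \cite{abboud2017distributed}: distinguishing ``containment'' ($B_i\subseteq C_j$ for some pair) from ``small intersection'' ($|B_i\cap C_j|\le|C_j|/2^{\log(n)^{1-o(1)}}$ for every pair), over $n$ sets and a universe of size $m=n^{o(1)}$, requires time $n^{2-o(1)}$. My starting point is a constant-horizon ``base'' MDP with one level-$1$ state per $B_i$, one level-$2$ state per universe element, and two absorbing level-$3$ states $\mathsf{good}$ (reward $1$) and $\mathsf{bad}$ (reward $0$). From $s_{\init}$ the policy picks an index $i\in[n]$; at state $B_i$ the unique action transitions uniformly into the level-$2$ states indexed by the elements of $B_i$; each level-$2$ state $x$ routes deterministically to either $\mathsf{good}$ or $\mathsf{bad}$, and this single destination bit is what the adversary will flip over the $T$ updates.

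The update stream sweeps through $C_1,\dots,C_n$: before round $t$ we want each level-$2$ state $x$ to route to $\mathsf{good}$ iff $x\in C_t$, which costs at most $|C_{t-1}\triangle C_t|\le 2m$ elementary two-entry transition swaps---each swap is precisely the permitted ``move all mass between $\mathsf{bad}$ and $\mathsf{good}$'' primitive, and no reward change or third-entry modification is needed. In round $t$ the value from $s_{\init}$ equals $\max_i|B_i\cap C_t|/|B_i|$, which is $1$ in some round for a YES instance and bounded by $o(1)$ for every round in a NO instance. Any algorithm maintaining this approximate value therefore solves $\mip$ and must take $n^{2-o(1)}$ time in total; divided by the $T=\Theta(nm)=n^{1+o(1)}$ updates, this is an amortized $n^{1-o(1)}$, i.e.\ $S^{1-o(1)}$, since $S=\Theta(n)$ in the base construction.

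Two augmentations complete the picture. First, although the YES/NO value gap is $\{1,o(1)\}$, the theorem demands $H/4$ versus $H/100$; I would stack $\Theta(H)$ copies of the base gadget in series, chaining $\mathsf{good}^{(k)}$ into the initial state of copy $k{+}1$ and leaving $\mathsf{bad}^{(k)}$ as a zero-reward continuation. A trajectory that picks the witness index at every copy accumulates $\Theta(H)$ reward, while in a NO instance the per-copy success probability is $o(1)$ and the expected cumulative reward is $o(H)$, which rescales to the required constant-gap statement. Second, to upgrade $S^{1-o(1)}$ to $(SAH)^{1-o(1)}$ I would spread the $n$-way level-$1$ choice across $H$ action-layers of arity $A$, so that the effective index space grows to $\Theta(A^H)\approx\Theta(SAH)$ under the theorem's hypothesis $H\ge(SA)^{o(1)}$; each elementary transition swap in the base gadget still corresponds to one elementary update in the enlarged MDP.

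The step I expect to be most delicate is combining these two augmentations without inflating the number of updates. Naively, each of the $\Theta(H)$ stacked copies carries its own level-$2$ gadget, so maintaining them in lockstep would multiply $T$ by $H$ and dilute the amortized lower bound by the same factor. A careful design---either sharing the level-$2$ universe states across all stacked copies, or scheduling their updates so that total work per round stays $\Theta(m)$ rather than $\Theta(Hm)$---is what is needed. Verifying that the resulting shared construction still yields the correct cumulative value in both the YES and NO cases, with no spurious high-reward path created by the sharing and no collision between the index-spreading branches and the stacked layers, is where I would spend the bulk of the technical effort.
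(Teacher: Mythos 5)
Your high-level plan --- reduce from bichromatic Max-IP, build a three-level gadget ($B_i$ states $\to$ universe-element states $\to$ good/bad terminals), sweep the update stream through $C_1,\dots,C_n$, and then amplify the gap by stacking --- is indeed the paper's starting point. The sharing concern you flag at the end is also the right one, and the paper's resolution is exactly the ``share the level-2 universe across all layers'' option you list: element states persist through all of phase~1 and only route to good/bad at a single designated step $H/2$, so one $C_\tau$-swap costs $O(m)$ elementary updates no matter how many amplification layers there are.

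However, there are three genuine gaps.

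\textbf{(i) The $(SAH)^{1-o(1)}$ upgrade is wrong as stated and misses the key structural idea.} You claim spreading the $n$-way choice over $H$ action-layers of arity $A$ gives an ``effective index space $\Theta(A^H)\approx\Theta(SAH)$.'' Under the hypothesis $H\ge(SA)^{o(1)}$ these quantities are nowhere near comparable, and a tree of arity $A$ and depth $H$ needs far more than $S$ states. The paper does something different and essential: it only loads $N=\Theta\bigl(SAH/(L\log S)\bigr)$ of the $B$-sets into the MDP at a time (as $S/4$ ``set states'' $\times$ $A/2$ actions $\times$ $G$ groups of binary routing trees spread over the horizon), and the update stream is a \emph{nested} loop --- an outer loop over $K=n/N$ batches of $B$-sets that rewrites the set-state transitions, and an inner loop over all $n$ of the $C_\tau$'s. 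Your update stream has only the inner loop; your construction therefore requires $S\ge n$, which can only yield $S^{1-o(1)}$, not $(SAH)^{1-o(1)}$. The parameter choice $n=T^{1/2-o(1)}(SAH)^{1/2}$ and the batching are precisely what convert the $n^{2-o(1)}$ total lower bound into the claimed $(SAH)^{1-o(1)}$ amortized bound.

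\textbf{(ii) The containment direction is flipped, and this matters for the NO case.} You take the YES case to be $B_i\subseteq C_j$, so $b\le c$; the per-copy success probability is $\kappa=\max_i|B_i\cap C_\tau|/b$, and in the NO case this is bounded only by $c/(wb)$, which need not be $o(1)$ since $c/b$ can be as large as $m$. The paper uses $C_j\subseteq B_i$ (so $c\le b$) and sets $L=\lceil b/c\rceil\ge1$, giving $\kappa=1/L$ in the YES case and $\kappa\le 1/(wL)\le 1/w=o(1)$ in the NO case, from which the constant-gap $(1-(1-1/L)^L)\ge 1/2$ versus $1/w$ follows cleanly. With your direction the gap is not guaranteed.

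\textbf{(iii) The amplification and its interaction with the update count is left unresolved.} You correctly note that naively stacking $\Theta(H)$ copies, each with its own $C$-gadget, inflates the per-round cost to $\Theta(Hm)$ and dilutes the amortized bound. You propose sharing or scheduling as possible fixes but explicitly defer this as ``the bulk of the technical effort.'' Since this is the crux of making amplification compatible with the amortized claim, the proof is incomplete without it. For the record, the paper resolves it with $L=\lceil b/c\rceil$ layers (not $\Theta(H)$ independent copies), in which the element states persist across layers, ``success'' means staying at an element $u\in C_\tau$ and collecting $H/2$ reward at the end, and ``failure'' returns you to the pivotal state for another try; the value recursion $V_\ell=\kappa\cdot\tfrac{H}{2}+(1-\kappa)V_{\ell+1}$ then gives $V_1=(1-(1-\kappa)^L)\cdot\tfrac{H}{2}$ with a single shared $C$-dependent step.
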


Our result is based on the widely accepted Strong Exponential Time Hypothesis ($\SETH$). 
\begin{conjecture}[Strong Exponential Time Hypothesis ($\SETH$), \cite{impagliazzo2001complexity}]
For any $\eps > 0$, there exists $k \geq 3$ such that the $k$-SAT problem on $n$ variables cannot be solved in time $O(2^{(1-\eps)n})$.
\end{conjecture}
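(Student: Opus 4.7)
The plan is to prove Theorem~\ref{thm:fully} by a fine-grained reduction from the gap version of the bichromatic Maximum Inner Product problem $\mip$, which by \cite{abboud2017distributed} has no $n^{2-o(1)}$-time algorithm under $\SETH$ even when the universe has size $m=n^{o(1)}$ and one only has to distinguish (a) $\exists\, i,j$ with $B_i\subseteq C_j$, from (b) $|B_i\cap C_j|\le |C_j|/2^{\log(n)^{1-o(1)}}$ for every $i,j$. Given such an instance, I would build a non-stationary MDP of total size $SAH = n^{1+o(1)}$ together with a sequence of $T=n^{1+o(1)}$ elementary updates such that the sequence of optimal starting-state values tells us which of (a), (b) holds. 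Any algorithm with amortized $O((SAH)^{1-o(1)})$ time per update would then solve $\mip$ in total time $T\cdot (SAH)^{1-o(1)} = n^{2-o(1)}$, contradicting $\SETH$.

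For the core gadget I would use a two-layer MDP. Step $h=1$ has $n$ states $b_1,\ldots,b_n$, one per set $B_i$; a start state has $n$ actions each transitioning deterministically to a distinct $b_i$, so the starting value equals $\max_i V(b_i)$. Step $h=2$ has $m$ ``element'' states $e_1,\ldots,e_m$, and at $b_i$ the playable actions correspond exactly to the elements of $B_i$ (action $k\in B_i$ moves deterministically to $e_k$). Each $e_k$ has a single action that, via a two-entry probability shift, routes either to an absorbing ``good'' state of reward $1$ or to an absorbing ``bad'' state of reward $0$. By toggling these transitions as a function of the round $t$, the effective reward at $e_k$ is made equal to the indicator $[\,k\notin C_{j(t)}\,]$, where $j(t)$ is the $C$-set currently being tested. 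The optimal value at the start state in round $t$ is then (a scaling of) $\max_i |B_i\setminus C_{j(t)}|$, which vanishes precisely when some $B_i\subseteq C_{j(t)}$.

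The key efficiency point is that switching from $C_{j(t)}$ to $C_{j(t+1)}$ only requires flipping the good/bad transition at elements in the symmetric difference, which costs $|C_{j(t)}|+|C_{j(t+1)}|$ of the allowed two-entry probability shifts. Sweeping over all $n$ sets $C_j$ thus uses $O(nm)=n^{1+o(1)}$ elementary updates, matching the promised $T$. This already yields an $\Omega(n^{1-o(1)})$ amortized lower bound, but only at a subconstant value gap (roughly $1/m=n^{-o(1)}$). I would then \emph{amplify} the gap to the $(H/4,H/100)$ gap demanded by the theorem by stacking $\Theta(H)$ independent copies of the gadget along the horizon, using the same $\mip$ instance in each layer so that a single witness index $i^\star$ simultaneously maximizes every copy; each copy contributes an additive $\Theta(1)$ to the gap, producing a $\Theta(H)$ separation in total value. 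Finally, to move from $\Omega(S^{1-o(1)})$ to $\Omega((SAH)^{1-o(1)})$, I would distribute the $n$ branches at the start across a depth-$\Theta(H)$ tree of fan-out $\Theta(A)$, rebalancing parameters so that $SAH=n^{1+o(1)}$.

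The main obstacle, and the place where the construction must be done with care, is the amplification step: I need the stacked copies to (i) share a single global update sequence so that $T$ does not blow up by a factor of $H$, and (ii) actually \emph{add} their per-copy gaps rather than interfering through shared absorbing states or correlated optima. Any failure on (i) degrades the amortized bound by an $H$ factor, and any failure on (ii) collapses the $(H/4,H/100)$ separation back to subconstant. Once both are handled --- essentially by reusing the same universe/element gadgets across layers while routing each layer's ``good/bad'' tail into fresh absorbing states whose rewards sum cleanly --- the reduction delivers exactly the $(SAH)^{1-o(1)}$ amortized lower bound and the constant-gap distinguishing task asserted in Theorem~\ref{thm:fully}.
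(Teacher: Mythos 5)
Your proposal does not address the statement you were asked about. The statement is the Strong Exponential Time Hypothesis itself, which the paper does not prove and could not prove: it is an unproven hardness hypothesis imported verbatim from Impagliazzo and Paturi \cite{impagliazzo2001complexity}, stated as a \emph{conjecture} and used only as an assumption (it is a strengthening of $\mathsf{P}\neq\mathsf{NP}$, so any proof of it would in particular resolve that question). There is no ``paper's own proof'' to compare against, and a reduction-based argument of the kind you sketch cannot establish it --- reductions transfer hardness \emph{from} $\SETH$ to other problems, they do not certify $\SETH$ itself.

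What you have actually written is a proof sketch for Theorem~\ref{thm:fully}, the paper's main hardness result, taken \emph{assuming} $\SETH$. As such it tracks the paper's strategy reasonably closely: reduce from the gap $\mip$ problem of \cite{abboud2017distributed}, encode the $B_i$'s in the transition structure, sweep through the $C_j$'s via a long sequence of two-entry probability shifts so that the total $n^{2-o(1)}$ hardness is amortized over $T=n^{1+o(1)}$ updates, then amplify the subconstant value gap by stacking layers along the horizon and spread states/actions to get the $(SAH)^{1-o(1)}$ bound. Two cautions if you intend this as a proof of Theorem~\ref{thm:fully}: the containment direction in Theorem~\ref{thm:arw} is $C\subseteq B$ (so the natural quantity is $\max_i |B_i\cap C_j|/b$, as in Lemma~\ref{lem:value-init}, not $\max_i|B_i\setminus C_j|$), and the paper's amplification is not additive across independent copies --- it chains $L=\lceil b/c\rceil$ layers so that the start value becomes $(1-(1-\kappa)^L)\cdot\tfrac{H}{2}$, which is what turns a per-layer success probability of $\kappa=1/L$ versus $\kappa\le 1/(wL)$ into the constant $H/4$ versus $H/100$ gap; your additive-stacking plan would need the per-copy gap to already be constant, which it is not. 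But none of this bears on the conjecture you were asked to prove, which remains exactly that: a conjecture.
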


Note that $\SETH$ is stronger than the $\mathsf{P}\neq\mathsf{NP}$ assumption, a strengthening that allows the proof of {\em polynomial} lower bounds on problems that have a polynomial-time algorithm --- such as NSRL.

The starting point of our reduction is the following hardness result for the Bichromatic Maximum Inner Product (\mip) problem, whose proof is based on the machinery of distributed PCP.
\begin{theorem}[Bichromatic Maximum Inner Product (\mip) \cite{abboud2017distributed}] 
\label{thm:arw}
Let $\gamma > 0$ be any constant, and let $n \in \mathbb{Z}_{+}$, $m = n^{o(1)}$, $w = 2^{(\log(n))^{1-o(1)}}$. Given two collections of sets $\mB = \{B_1, \ldots, B_n\}$ and $\mC = \{C_1, \ldots, C_n\}$ over universe $[m]$, satisfying $|B_1| = \cdots =|B_n| = b$ and $|C_1| = \cdots =|C_n| = c$ for some $b, c \in [m]$. Unless $\SETH$ is false, no algorithm can distinguish the following two cases in time $O(n^{2-\gamma})$:
\begin{itemize}
\item {\bf YES instance. } There exists two sets $B \in \mB$, $C \in \mC$ such that $C \subseteq B$;
\item {\bf NO instance. } For every $B \in \mB$ and $C \in \mC$, $|B\cap C| \leq c/w$.
\end{itemize}
\end{theorem}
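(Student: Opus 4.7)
I reduce from the Bichromatic \mip{} of Theorem~\ref{thm:arw}: given $\mB = \{B_1,\dots,B_n\}$ and $\mC = \{C_1,\dots,C_n\}$ over $[m]$ with $m = n^{o(1)}$ and gap $w = 2^{(\log n)^{1-o(1)}}$, I build a non-stationary MDP together with a schedule of $T = n^{1+o(1)}$ elementary updates so that the maintained value along the sequence separates YES from NO. Any amortized $O((SAH)^{1-o(1)})$ NSRL algorithm would then decide \mip{} in total time $T \cdot (SAH)^{1-o(1)} = n^{2-o(1)}$, contradicting $\SETH$ by Theorem~\ref{thm:arw}.

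\textbf{Base construction.} The base MDP has horizon $H = 3$. The unique step-$1$ state $s_{\init}$ has $n$ actions $a_1, \dots, a_n$; action $a_i$ deterministically enters a state whose single step-$2$ action transitions uniformly over the ``universe'' states $\{y \in [m] : y \in B_i\}$. Each universe state $y$ has a single step-$2$ action with transition supported on two sinks $y_+, y_-$ at step $3$, where $y_+$ has reward $1$ and $y_-$ has reward $0$. Declare $y$ \emph{active} when all probability mass sits on $y_+$; toggling $y$'s activity is then exactly one elementary update at $(y,\cdot,2)$ that swaps mass between $y_+$ and $y_-$. With active set $C \subseteq [m]$ one checks $V_1^\ast(s_{\init}) = \max_i |B_i \cap C|/|B_i|$.

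\textbf{Schedule and gap.} Start with empty active set. For $j = 1, \dots, n$, install $C_j$ by toggling the $\leq 2m$ elements of the symmetric difference with the previous active set, then query the maintained value. This gives $T = O(nm) = n^{1+o(1)}$ updates, with $S = O(m)$, $A = n$, $H = O(1)$, so $SAH = n^{1+o(1)}$. In a YES instance some block ends with value $|C_j|/|B_i| \geq c/b$; in a NO instance every block ends below $c/(bw)$. Assuming, via standard padding of the \mip{} instance, that $c = b$, the YES value equals $1$ and the NO value is at most $1/w$; scaling rewards by $H/2$ puts the YES value at $H/2 > H/4$ and the NO value at $H/(2w) < H/100$ for $n$ sufficiently large, exactly the separation demanded by the theorem. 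An amortized $O((SAH)^{1-o(1)})$ solver would therefore distinguish the two cases in total time $n^{2-o(1)}$, contradicting Theorem~\ref{thm:arw}.

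\textbf{Balancing $SAH$ and main obstacle.} The bound so far is only $A^{1-o(1)}$; to reach $(SAH)^{1-o(1)}$ I would split the index $i \in [n]$ as a product $i = (i_S, i_A, i_H)$, realizing $i_A$ by an action at $s_{\init}$, $i_S$ by the state entered next, and $i_H$ by a sequence of choice layers threaded along the horizon, so that the number of realized paths equals $n = \Theta(SAH)$ while each update still touches exactly one $(s,a,h)$. The main obstacle is precisely this locality: if one naively replicates the base gadget across $S$ states or along $H$ horizon steps, a single \mip{} ``toggle'' would require many updates and destroy the amortized accounting. My planned remedy is to share the universe-element gadgets across all copies by routing every copy's universe read through the \emph{same} physical sinks $y_+, y_-$ at step $2$, so that one swap at $(y,\cdot,2)$ simultaneously toggles $y$ in every copy. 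Verifying that this sharing neither introduces spurious high-reward paths in the NO case nor violates the two-entry-swap update model is the principal correctness bookkeeping; once done, counting the $\leq 2m = n^{o(1)}$ updates per block and tracking the parameter arithmetic deliver the amortized $(SAH)^{1-o(1)}$ lower bound claimed in Theorem~\ref{thm:fully}.
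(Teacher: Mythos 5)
Your proposal does not prove the statement it was supposed to prove. Theorem~\ref{thm:arw} is the hardness of the gap version of Bichromatic \mip{} itself --- the claim that, under $\SETH$, no $O(n^{2-\gamma})$-time algorithm can distinguish ``some $C_j\subseteq B_i$'' from ``all intersections have size at most $c/w$'' over a universe of size $m=n^{o(1)}$ with gap $w=2^{(\log n)^{1-o(1)}}$. The paper does not prove this; it imports it from \cite{abboud2017distributed}. What you wrote is a reduction \emph{from} Theorem~\ref{thm:arw} \emph{to} non-stationary RL, i.e., a sketch aimed at Theorem~\ref{thm:fully}: you invoke Theorem~\ref{thm:arw} in your first sentence and again in your conclusion, so as a proof of Theorem~\ref{thm:arw} the argument is circular and establishes nothing about \mip{} hardness.

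A genuine proof of Theorem~\ref{thm:arw} needs entirely different machinery: starting from $\SETH$, reduce $k$-SAT to an orthogonal-vectors-type problem by split-and-list, then apply the distributed PCP framework of \cite{abboud2017distributed} --- an efficient MA communication protocol for set disjointness with algebraic (error-correcting) encodings --- to convert exact orthogonality into a maximum-inner-product instance with multiplicative gap $2^{(\log n)^{1-o(1)}}$ over a universe of size $n^{o(1)}$, and finally normalize to equal set sizes and the containment-versus-tiny-intersection form stated here. None of these ingredients appear in your write-up. As a secondary remark, even read as an attempt at Theorem~\ref{thm:fully} your sketch diverges from (and falls short of) the paper's construction: the paper amplifies the tiny $c/b$ value gap to a constant by stacking $L=\lceil b/c\rceil$ layers, batches the $B$-sets into stages so that set-state transitions can be rewritten within the update budget, and uses routing states to spread the $n$ choices across all of $S$, $A$ and $H$; your ``share the sinks across all copies'' fix and the $c=b$ padding step are precisely the unverified points where such a scheme tends to break. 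But the primary gap is that you assumed the theorem you were asked to prove.
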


\paragraph{Parameters.} We reduce $\mip$ to NSRL.
For any sufficiently large parameters $S, A, H, T$, let 
\[
n = T^{1/2-o(1)}\cdot (SAH)^{1/2} \quad \text{and} \quad m = n^{o(1)} 
\]
be the input parameters of $\mip$. Given a $\mip$ instance with sets $B_1,\ldots, B_n$ and $C_{1}, \ldots, C_n$ over a ground set $[m]$, recall $b, c \in [m]$ are the size of set $\{B_i\}_{i\in [n]}$ and $\{C_i\}_{i \in [n]}$ . Let
\[
L = \lceil b/c \rceil \quad \text{and} \quad N = \frac{SAH}{16L(\log_2(S)+2)}.
\]
We shall divide $\{B_i\}_{i \in [n]}$ into $K = n/N$ batches and each batch contains $N$ sets. That is, $\{B_i\}_{i \in [n]} = \{B_{k, \nu}\}_{k\in [K], \nu\in [N]}$. In the proof, we assume the total number of updates $SAH \leq T \leq \poly(SAH)$, i.e., it is polynomially bounded.

\subsection{Construction of a hard instance} 
We first describe the MDP at the initial stage ($t= 0$), with state space $\{\mS_h\}_{h \in [H]}$, action space $\{\mA_h\}_{h \in [H]}$, transition function $\{P_h\}_{h\in [H]}$ and reward function $\{r_h\}_{h\in [H]}$. A (simplified) illustration can be found at Figure \ref{fig:hardness}. We omit the subscript of $t= 0$ for simplicity. 
\begin{figure}[!htbp]
\includegraphics[width=\textwidth]{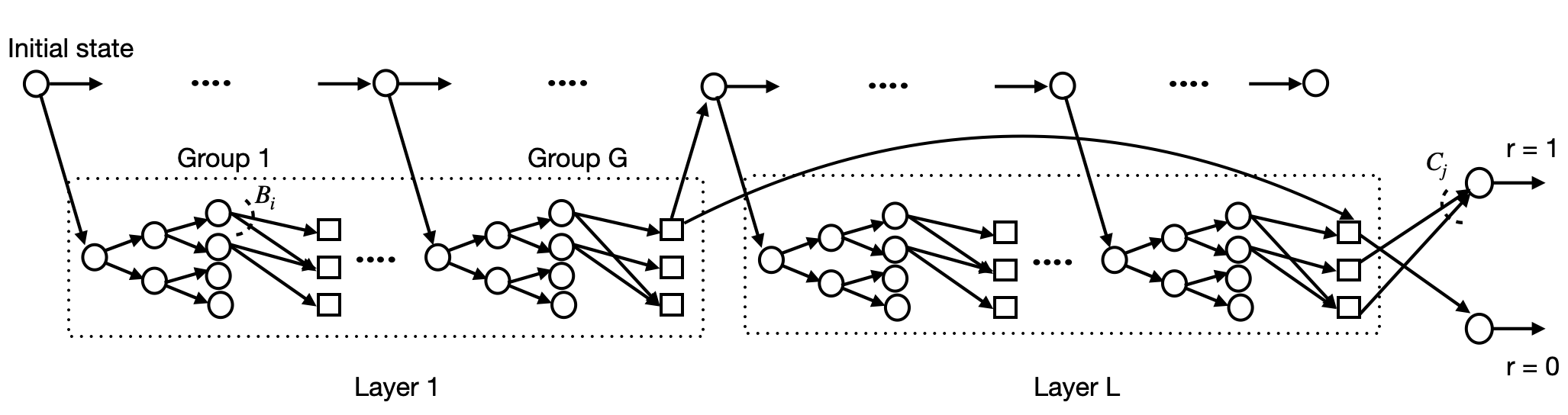}
\caption{A snapshot of the hard instance}
\label{fig:hardness}
\end{figure}

\paragraph{Horizon.} We divide the entire horizon into two phases 
\[
[H] = \mH_1 \cup \mH_2, \quad \text{where} \quad \mH_1 = [H/2]\quad \text{and}\quad \mH_2 = \left[H/2: H\right].
\]
The second phase is relatively simple and involves only two terminal states that provide rewards. The first phase is more involved and determines the destination state.

The first phase contains $L$ layers, and each layer contains $H/2L$ steps
\[
\mH_1 = \mH_{1,1} \cup \cdots \cup \mH_{1, L}, \quad \text{where} \quad \mH_{1, \ell} = \left[(\ell-1) \cdot \frac{H}{2L} + 1: \ell \cdot \frac{H}{2L} \right] \,\forall \ell \in [L].
\]
The layers are used for amplifying the difference between good and bad policies. The structure of the MDP is for identical for each layer, except the last step at the last layer.

For each layer $\ell \in [L]$, we further divide it into $G := \frac{H}{2L(\log_2(S)+2)}$ groups, and each group contains $\log_2(S)+2$ steps,
\[
\mH_{1, \ell} = \mH_{1, \ell, 1} \cup \cdots \cup \mH_{1, \ell, G}
\]
where
\[ 
\mH_{1, \ell,g} = \left[(\ell-1) \cdot \frac{H}{2L} + (g-1)(\log_2(S)+2) + 1: (\ell-1) \cdot \frac{H}{2L} + g \cdot (\log_2(S) + 2) \right] \,\forall g \in [G]. 
\]

 We set $h(\ell, g, \tau) := (\ell-1)(H/2L) + (g-1)(\log_2(S)+2) + \tau$ be the $\tau$-step, at the $g$-th group of the $\ell$-th layer, where $\tau \in [\log_2(S)+2],g\in [G], \ell\in [L]$. 
For simplicity, we also write $h(\ell, g) = h(\ell, g, \log_2(S)+2)$ and $h(\ell) = h(\ell, G)$ be the last step of each group and layer.

\paragraph{States.} There are five types of states: terminal states, element states, set states, routing states and the pivotal state.
\begin{itemize}
\item {\bf Terminal states.} There are two terminal states $s^{\mt}_1$ and $s^{\mt}_2$, and they appear at every steps $h \in [H]$. We use $s^{\mt}_{h,1}, s^{\mt}_{h,2}$ to denote the terminal states at $\mS_h$.
\item {\bf Element states.} There are $m$ element states $\{s^{\me}_u\}_{u\in [m]}$ that appear at every step $h \in \mH_1$ of phase one. We use $s^{\me}_{h,u}$ to denote the $u$-th element state at $\mS_h$.
\item {\bf Set states.} There are $S/4$ set states $\{s^{\mb}_{i}\}_{i \in [S/4]}$. The set states only appear on the second last step of each group $\mH_{\ell, g}$. In particular, for each layer $\ell \in [L]$, group $g \in [G]$, let $s^{\mb}_{h(\ell, g)-1, i}$ denote the $i$-th ($i \in [S/4]$) set state at $\mS_{h(\ell, g)-1}$.
\item {\bf Pivotal state} There is one pivotal state $s^{\mmp}$ that appears at every step $h\in \mH_1$ of Phase 1, denoted as $s^{\mmp}_{h}$. The MDP start with the pivotal state, i.e., $s_{\init}:= s_{1}^{\mmp}$.
\item {\bf Routing states.} The routing states are used for reaching set states.  There $S/4$ routing states $\{s_{\alpha}^{\mr}\}_{\alpha \in [S/4]}$ that appear at the $[2:\log_2(S)]$-th step of each group. In particular, at layer $\ell\in [L]$, group $g\in [G]$, step $\tau \in [2:\log_2(S)]$, let $\{s^{\mr}_{h(\ell, g, \tau), \alpha}\}_{\alpha \in [1:2^{\tau-2}]}$ be the collection of routing states at $\mS_{h(\ell, g, \tau)}$.
\end{itemize}

The total number of possible states is at most $2 + m + S/4 + S/4 + 1 \leq S$.

\paragraph{Actions}
There are five types of actions. The terminal action $a^{\mt}$, the element actions $a^{\me}$, the set actions $\{a^{\mb}_j\}_{j \in [A/2]}$, the pivotal action $\{a^{\mmp}_1, a^{\mmp}_2\}$ and the routing actions $\{a^{\mr}_1, a^{\mr}_2\}$. 
The total number of action is at most $A/2 + 6 \leq A$, and we assume these actions appear at every step $h \in [H]$.

\paragraph{Reward}
The only state that returns non-zero reward is the terminal state $\{s^{\mt}_{h, 1}\}_{h \in \mH_2}$. Formally, we set 
\begin{align}
r_{h}(s, a) = 0 \quad \text{when} \quad h \in \mH_1 \quad \text{and} \quad 
r_{h}(s, a) = \left\{
\begin{matrix}
1 & s = s^{\mt}_{h, 1} \\
0 & \text{otherwise}
\end{matrix}
\right.
\quad \text{when} \quad h \in \mH_2. \label{eq:reward}
\end{align}

\paragraph{Transitions} We next specify the transition probability of the initial MDP.

{\bf (a) Terminal states.} The transition of terminal states is deterministic and always keeps the state terminal, that is
\begin{align}
P_h(s_{h, 1}^{\mt}, a) = \mathbf{1}\{s_{h+1, 1}^{\mt}\} \quad \text{and} \quad P_h(s_{h, 2}^{\mt}, a) = \mathbf{1}\{s_{h+1, 2}^{\mt}\} \quad \forall h\in [H-1], a\in \mA. \label{eq:terminal-transition}
\end{align}
Here we use $\mathbf{1}\{s\} \in \Delta_{\mS_{h+1}}$ to denote the one-hot vector that is $1$ at state $s$ and $0$ otherwise. Combining with the definition of reward functions, the MDP guarantees that a policy receives $H/2$ reward once it goes to the first terminal state $s_{h, 1}^{\mt}$ at some step $h\in \mH_2$. Meanwhile, it receives $0$ reward if it ever goes to the second terminal state $s_{h, 2}^{\mt}$.

{\bf (b) Element states.} At step $h < H/2$, for any element $u\in [m]$, the transition function of $s_{h, u}^{\me}$ equals
\begin{align}
P_{h}(s^{\me}_{h, u}, a^{\me}) = 
\left\{
\begin{matrix}
\mathbf{1}\{s^{\mmp}_{h + 1}\} & h = h(\ell) \text{ for some } \ell \in [L-1]\\
\mathbf{1}\{s^{\me}_{h + 1, u}\} & \text{otherwise} 
\end{matrix}
\right. \label{eq:element-transition1}
\end{align}
and 
\begin{align}
P_{h}(s^{\me}_{h, u}, a) = \mathbf{1}\{s^{\me}_{h + 1, u}\}, \quad \forall a\in \mA \backslash \{a^{\me}\}. \label{eq:element-transition2}
\end{align}
That is, the element state $s_{h, u}^{\me}$ always stays on itself, except at the end of each layer $\ell \in [L]$, it can go to the pivotal state.

At the end of the first phase, the transition of element state is determined by the set $C$. 
In the initialization stage ($t=0$), let $C_{0} \subseteq [m]$ be an arbitrary set of size $c$ and it would be replace later, let
\[
P_{H/2}(s^{\me}_{H/2, u}, a^{\me}) = \left\{
\begin{matrix}
\mathbf{1}\{s_{H/2+1,1}^{\mt}\} & u \in C_{0}\\
\mathbf{1}\{s_{H/2+1,2}^{\mt}\} & u \notin C_{0}\\
\end{matrix}
\right.
\]
and 
\[
P_{H/2}(s^{\me}_{H/2, u}, a) = \mathbf{1}\{s_{H/2+1,2}^{\mt}\}, \quad \forall a\in \mA \backslash \{a^{\me}\}.
\]
That is, if the element $u \in C_{0}$, then it can go to a high reward terminal state $s_{H/2+1, 1}^{\mt}$; otherwise it goes to the no-reward terminal $s_{H/2+1, 2}^{\mt}$.
Looking ahead, we would update the state-action pairs $\{(s_{H/2, u}^{\me}, a^{\me})\}_{u \in [m]}$ according to sets $\{C_{i}\}_{i \in [n]}$ periodically.

{\bf (c) Set states.} The transition function of set states is determined by the sets $\{B_{k, \nu}\}_{k \in [K],\nu\in [N]}$.
In the initialization stage ($t= 0$), let $\{B_{0, \nu}\}_{\nu \in [N]}$ be arbitrary sets of size $b$ and they would be replaced later in the update sequence.
Recall that a set state would appear at the second last step of a group $\mH_{\ell, g}$, for some layer $\ell \in [L]$ and group $g \in [G]$.
Let 
\[
N(g, i, j) := (g-1)(S/4)(A/2) + (i-1)(A/2) + j,
\]
and therefore, 
\[
\{N(g, i, j): g\in [G], i \in [S/4], j \in [A/2]\} = [N].
\]

The transition function of state-action pair $(s_{h(\ell, g)-1, i}^{\mb}, a_{j}^{\mb})$ equals
\begin{align}
P_{h(\ell, g)-1}(s^{\mb}_{h(\ell, g)-1, i}, a^{\mb}_{j}) = \unif(s^{\me}_{h(\ell, g), u}: u \in B_{0, N(g, i, j)})
\quad \forall  g \in [G], i \in [S/4], j \in [A/2]. \label{eq:state-transition}
\end{align} 
Here the RHS is the uniform distribution over the element states $s_{h(\ell, g), u}^{\me}$ for element $u \in B_{0, N(g, i, j)}$.
For the rest of actions, it goes to the no-reward terminal $s^{\mt}_{h(\ell, g), 2}$:
\begin{align*}
P_{h(\ell, g)-1}(s^{\mb}_{h(\ell, g)-1, i}, a) = \mathbf{1}\{s_{h(\ell, g),2}^{\mt}\} \quad \forall a \in \mA \backslash \{a^{\me}_{j}\}_{j \in [A/2]} 
\end{align*}

{\bf (d) Pivotal states.} The pivotal state $s^{\mmp}_{h}$ appears at every step $h \in \mH_1$, and for $h < H/2 - 1$, the transition function equals
\begin{align}
P_{h}(s^{\mmp}_{h}, a) = \left\{
\begin{matrix}
\mathbf{1}\{s^{\mr}_{h+1, 1}\} & a = a^{\mmp}, h = h(\ell, g,1)  \text{ for some } \ell\in [L], g\in [G]\\
\mathbf{1}\{s^{\mmp}_{h+1}\} & \text{otherwise}
\end{matrix}
\right. \label{eq:pivotal-transition}
\end{align}
That is, the pivotal state stays on itself, except at the first step of $\mH_{\ell, g}$, it could go to the routing state $s_{h(\ell, g,2),1}^{\mr}$.

At the $H/2$-th step, it goes to the no-reward terminal $s_{H/2+1,2}^{\mt}$,
\begin{align*}
P_{H/2}(s^{\mmp}_{H/2}, a)=\mathbf{1}\{s^{\mt}_{H/2+1, 2}\} \quad \forall a \in A. 
\end{align*}

{\bf (e) Routing states.} Recall $\{s^{\mr}_{h(\ell, g, \tau), \alpha}\}_{\alpha \in [1:2^{\tau-2}]}$ is the collection of routing states at the $\alpha$-th step ($\alpha \in [2:\log_2(S)]$), $g$-th group ($g\in [G]$) and $\ell$-th layer ($\ell \in [L]$).

When $\tau \in [2:\log_2(S) -1]$, the transition function equals
\begin{align}
P_{h(\ell, g, \tau)}(s^{\mr}_{h(\ell, g, \tau), \alpha}, a) = \left\{
\begin{matrix}
\mathsf{1}\{s^{\mr}_{h(\ell, g, \tau+1), 2\alpha-1}\} & a = a_1^{\mr}\\
\mathsf{1}\{s^{\mr}_{h(\ell, g, \tau+1), 2\alpha}\} & a = a_2^{\mr}\\
\mathsf{1}\{s^{\mt}_{h(\ell, g, \tau+1), 2}\} & \text{otherwise}
\end{matrix}
\right., \quad \forall \alpha \in [2^{\tau-2}]. \label{eq:routing1}
\end{align}
In other words, the routing state $s^{\mr}_{h(\ell, g, \tau), \alpha}$ goes to either $s^{\mr}_{h(\ell, g, \tau+1), 2\alpha-1}$ or $s^{\mr}_{h(\ell, g, \tau+1), 2\alpha-1}$, depending on the choice of actions (unless it goes to the no-reward terminal $s^{\mr}_{h(\ell, g, \tau+1),2}$). 

When $\tau = \log_2(S)$, the routing state $s^{\mr}_{h(\ell, g, \log_2(S)), \alpha}$  goes to the set state $s^{\mb}_{h(\ell, g)-1, \alpha}$ ($\alpha \in [S/4]$), that is,
\begin{align}
P_{h(\ell, g, \log_2(S))}(s^{\mr}_{h(\ell, g, \log_2(S)), \alpha}, a) =  s^{\mb}_{h(\ell, g)-1, \alpha}, \quad \forall \alpha \in [S/4], a \in \mA. \label{eq:routing2}
\end{align}
The entire transition of routing states within a group works like a binary search tree: it comes from the pivotal state and goes to one of the set states. We note that if $S \leq A$ the construction could be simplified: we can remove routing states and have a pivotal state directly go to set states. This completes the description of the initial MDP.

\paragraph{Update sequence.} We next specify the sequence of updates to the MDP. 
The sequence of updates is divided into $K = n/N$ stages, and each stage contains $n$-epochs.

At the beginning of each stage, the update occurs on the state-action pairs for set-states:
\[
\{(s^{\mb}_{h(\ell, g)-1,i}, a^{\mb}_j)\}_{\ell\in [L], g\in [G], i\in [S/4], j\in [A/2]}
\] 
Concretely, there is an initialization phase at the beginning of the $k$-th stage ($k\in [K]$). Let $t(k)\in [T]$ be the end of initiazation phase, and the nature sets
\begin{align*}
P_{t(k), h(\ell, g)-1}(s^{\mb}_{h(\ell, g)-1, i}, a^{\mb}_{j}) = \unif(s^{\me}_{h(\ell, g), u}: u \in B_{k, N(g, i, j)})
\quad \forall \ell \in [L],  g \in [G], i \in [S/4], j \in [A/2]. 
\end{align*}

Each stage contains $n$-epochs, and during each epoch, the update occurs on the state-action pairs $\{(s^{\me}_{H/2, u}, a^{\me})\}_{u \in [m]}$ of element state-action, in the $H/2$-th step. 
Let $t(k, \tau) \in [T]$ be the end of $k$-th ($k\in [K]$) stage and $\tau$-th ($\tau\in [n]$) epoch.
In the $\tau$-th epoch ($\tau \in [n]$), for each element $u \in [m]$, the transition function is updated to
\begin{align}
P_{t(k, \tau), H/2}(s^{\me}_{H/2, u}, a^{\me}) = \left\{
\begin{matrix}
\mathbf{1}\{s_{H/2+1,1}^{\mt}\} & u \in C_{\tau}\\
\mathbf{1}\{s_{H/2+1,2}^{\mt}\} & u \notin C_{\tau}
\end{matrix}. \label{eq:transition-new}
\right.
\end{align}

To count the total number of updates, there are $K = n/N$ stages. The initialization takes at most $O(SAHm)$ updates; there are $n$ epochs, and each epoch contains at most $2m$ updates.
Hence the total number of updates equals $(n/N) \cdot O(SAHm + 2mn) \approx T$.

\subsection{Analysis}
We now proceed to prove Theorem \ref{thm:fully}. For any stage $k \in [K]$ and epoch $\tau\in [n]$, we compute the $V$-value of the optimal policy. The proof can be found at the Appendix \ref{sec:fully-app}
\begin{lemma}[$V$-value, terminal states]
\label{lem:v-value-terminal}
At the end of stage $k \in [K]$ and epoch $t \in [n]$, for any step $h \in [H]$, the $V$-value of optimal policy at terminal states satisfies $V^{*}_{t(k, \tau), h}(s^{\mt}_{h, 1}) = \min\{H+1 - h, H/2\}$ and $V^{*}_{t(k, \tau), h}(s^{\mt}_{h, 2}) = 0$.
\end{lemma}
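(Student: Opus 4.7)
The plan is to prove the lemma by a straightforward backward induction on the step index $h$, leveraging two structural observations about the construction: (i) the terminal-state transitions $P_h(s^{\mt}_{h,1}, a) = \mathbf{1}\{s^{\mt}_{h+1,1}\}$ and $P_h(s^{\mt}_{h,2}, a) = \mathbf{1}\{s^{\mt}_{h+1,2}\}$ in equation \eqref{eq:terminal-transition} are deterministic, action-independent, and remain untouched by every update in the sequence (the updates only modify set-state actions and the element-state actions at step $H/2$); and (ii) the reward function at terminal states, defined in \eqref{eq:reward}, depends only on $h$ and on which terminal state we are at. Consequently the $V$-values at terminal states are independent of the round index $t(k,\tau)$ and it suffices to compute them under the fixed reward/transition structure.

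The analysis for $s^{\mt}_{h,2}$ is immediate: the trajectory from $s^{\mt}_{h,2}$ remains in the second terminal forever and accrues zero reward at every step, so $V^{*}_{t(k,\tau),h}(s^{\mt}_{h,2}) = 0$ for every $h \in [H]$, regardless of the policy.

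For $s^{\mt}_{h,1}$, I would induct downwards from $h = H$. In the base case $h = H$, the only contribution is the instantaneous reward $r_H(s^{\mt}_{H,1}) = 1$, matching $\min\{1, H/2\}$. For the inductive step, by the Bellman equation and the deterministic transition, we have
\begin{equation*}
V^{*}_{t(k,\tau),h}(s^{\mt}_{h,1}) = r_h(s^{\mt}_{h,1}) + V^{*}_{t(k,\tau),h+1}(s^{\mt}_{h+1,1}).
\end{equation*}
If $h > H/2$ (so $h \in \mH_2$), then $r_h(s^{\mt}_{h,1}) = 1$ and by induction $V^{*}_{h+1}(s^{\mt}_{h+1,1}) = \min\{H - h, H/2\} = H - h$, giving $V^{*}_h = H + 1 - h = \min\{H+1-h, H/2\}$. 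If $h \le H/2$ (so $h \in \mH_1$), then $r_h(s^{\mt}_{h,1}) = 0$ and by induction $V^{*}_{h+1}(s^{\mt}_{h+1,1}) = \min\{H - h, H/2\} = H/2$ (since $h + 1 \le H/2 + 1$), hence $V^{*}_h = H/2 = \min\{H+1-h, H/2\}$ as claimed.

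There is no real obstacle here: the only care required is the boundary check at $h = H/2$, where one must verify that the minimum switches from the $H/2$ branch to the $H+1-h$ branch exactly at $h = H/2 + 1$. Since updates never alter either the terminal transitions or the terminal rewards, the resulting value does not depend on $k$ or $\tau$, which gives the uniform statement in the lemma.
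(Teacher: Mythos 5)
Your proof is correct and follows essentially the same reasoning as the paper's (which simply notes that terminal states stay on themselves under every action and at every round, that $s^{\mt}_{\cdot,2}$ never earns reward, and that $s^{\mt}_{\cdot,1}$ earns $1$ per step in phase two). You have merely formalized this observation as a backward induction and checked the boundary at $h = H/2$; nothing in the argument is different in substance.
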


\begin{lemma}[$V$-value, element states] 
\label{lem:v-value-element}
At the end of stage $k \in [K]$ and epoch $\tau \in [n]$, for any layer $\ell \in [L]$ and any step $h \in \mH_{1, \ell}$ 
\begin{itemize}
\item For any element $u \in C_\tau$, $V^{*}_{t(k, \tau), h}(s^{\me}_{h, u}) = H/2$; and
\item For any element $u \notin C_\tau$, we have $V^{*}_{t(k, \tau), h}(s^{\me}_{h, u}) = V^{*}_{t(k, \tau), h(\ell)+1}(s^{\mmp}_{h(\ell)+1})$. 
\end{itemize}
Here we take $V_{t(k, \tau), H/2+1}(s^{\mmp}_{H/2+1}) := 0$.
\end{lemma}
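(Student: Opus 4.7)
The plan is a nested backward induction: outer on layers $\ell = L, L-1, \ldots, 1$, and inner on steps $h \in \mH_{1,\ell}$ within each layer. The inner induction is immediate from (\ref{eq:element-transition1}) and (\ref{eq:element-transition2}): for every $h \in \mH_{1,\ell}$ strictly before the last step $h(\ell)$, every action at $s^{\me}_{h,u}$ transitions deterministically to $s^{\me}_{h+1,u}$, and no reward is accrued in Phase 1. Hence $V^{*}_{t(k,\tau), h}(s^{\me}_{h,u})$ is constant across a layer, so it suffices to establish the claim at $h = h(\ell)$.

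For the base case $\ell = L$, we have $h(L) = H/2$, and the current update (\ref{eq:transition-new}) sends $s^{\me}_{H/2,u}$ via $a^{\me}$ to $s^{\mt}_{H/2+1,1}$ when $u \in C_\tau$ and to $s^{\mt}_{H/2+1,2}$ otherwise, while every other action goes to $s^{\mt}_{H/2+1,2}$ (unchanged from the initial definition). Combining with Lemma \ref{lem:v-value-terminal} yields $V^{*}(s^{\me}_{H/2,u}) = H/2$ if $u \in C_\tau$ and $0$ otherwise, which matches the convention $V^{*}_{t(k,\tau), H/2+1}(s^{\mmp}_{H/2+1}) := 0$.

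For the inductive step at layer $\ell < L$, at step $h(\ell)$ the action $a^{\me}$ sends $s^{\me}_{h(\ell),u}$ to $s^{\mmp}_{h(\ell)+1}$, while every other action sends it to $s^{\me}_{h(\ell)+1,u}$. Invoking the outer hypothesis on layer $\ell+1$, the latter has value $H/2$ when $u \in C_\tau$ (so the $V$-value is $H/2$ as claimed, being the global upper bound on any value) and value $V^{*}(s^{\mmp}_{h(\ell+1)+1})$ when $u \notin C_\tau$. In the second case the $V$-value equals $\max\{V^{*}(s^{\mmp}_{h(\ell)+1}),\, V^{*}(s^{\mmp}_{h(\ell+1)+1})\}$, and the lemma reduces to showing this maximum equals the first term.

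The main obstacle is exactly this monotonicity claim $V^{*}(s^{\mmp}_{h(\ell)+1}) \geq V^{*}(s^{\mmp}_{h(\ell+1)+1})$, which I would handle as an auxiliary sub-claim. The idea is that (\ref{eq:pivotal-transition}) allows the pivotal state to self-loop throughout Phase 1: every action keeps $s^{\mmp}_h$ pivotal at a non-group-start step, and every action except $a^{\mmp}$ keeps it pivotal at a group-start step. Thus, starting from $s^{\mmp}_{h(\ell)+1}$, the agent can idle (taking any action other than $a^{\mmp}$ at group starts) until step $h(\ell+1)+1$ and then follow any optimal continuation, certifying a value of at least $V^{*}(s^{\mmp}_{h(\ell+1)+1})$. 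Since Phase 1 yields no reward, this idling strategy loses nothing, and optimality of $V^{*}$ gives the desired inequality, closing the induction.
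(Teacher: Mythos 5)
Your proof is correct and follows essentially the same approach as the paper: stay within a layer via the deterministic self-transitions, at a layer boundary exploit that $u\in C_\tau$ can reach the rewarding terminal (achieving the maximum $H/2$) while $u\notin C_\tau$ should exit to the pivotal state, with the monotonicity $V^*(s^{\mmp}_{h(\ell)+1})\geq V^*(s^{\mmp}_{h(\ell+1)+1})$ justified by the pivotal state's ability to idle. The only difference is that you set up the backward induction and the monotonicity sub-claim explicitly, whereas the paper's proof compresses the same reasoning into a few informal sentences.
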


\begin{lemma}[$V$-value, set states]
\label{lem:v-value-set}
At the end of stage $k \in [K]$ and epoch $\tau \in [n]$, for each level $\ell \in [L]$, group $g \in [G]$, we have
\begin{align*}
&~ V^{*}_{t(k, \tau), h(\ell, g)-1}(s^{\mb}_{h(\ell, g)-1, i})\\
=&~ \max_{j \in [A/2]}\left\{ \frac{|C_\tau \cap B_{k, N(g, i, j)}|}{b} \cdot \frac{H}{2} + \left(1- \frac{|C_\tau \cap B_{k, N(g, i, j)}|}{b}\right) \cdot V^{*}_{t(k, \tau), h(\ell)+1}(s^{\mmp}_{h(\ell)+1})\right\}
\end{align*}
\end{lemma}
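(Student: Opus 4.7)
The plan is a direct application of the Bellman optimality equation at step $h(\ell,g)-1$, combined with the previously established $V$-values at element states and terminal states. Since $h(\ell,g)-1 \in \mH_1$, the immediate reward at this step is $0$ by \eqref{eq:reward}, so $V^{*}_{t(k,\tau), h(\ell,g)-1}(s^{\mb}_{h(\ell,g)-1, i})$ equals the maximum over $a \in \mA$ of $\E_{s' \sim P_{t(k,\tau), h(\ell,g)-1}(s^{\mb}_{h(\ell,g)-1,i}, a)}[V^{*}_{t(k,\tau), h(\ell,g)}(s')]$.

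First I would treat the ``non-useful'' actions $a \in \mA \setminus \{a^{\mb}_j\}_{j \in [A/2]}$. By construction these actions send the set state deterministically to the no-reward terminal $s^{\mt}_{h(\ell,g), 2}$, which by Lemma \ref{lem:v-value-terminal} has $V^{*}$-value $0$. Hence any such action contributes $0$, and the maximum is achieved (or tied) by one of the set actions $a^{\mb}_j$, since the weighted average I compute below is nonnegative.

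Next I would analyze the set action $a^{\mb}_j$. By \eqref{eq:state-transition}, the distribution $P_{t(k,\tau), h(\ell,g)-1}(s^{\mb}_{h(\ell,g)-1,i}, a^{\mb}_j)$ is uniform over $\{s^{\me}_{h(\ell,g), u} : u \in B_{k, N(g,i,j)}\}$, a set of size $b$. Thus the $Q$-value equals
\begin{align*}
Q^{*}_{t(k,\tau), h(\ell,g)-1}(s^{\mb}_{h(\ell,g)-1,i}, a^{\mb}_j) = \frac{1}{b} \sum_{u \in B_{k, N(g,i,j)}} V^{*}_{t(k,\tau), h(\ell,g)}(s^{\me}_{h(\ell,g), u}).
\end{align*}
Splitting the sum according to whether $u \in C_\tau$ and invoking Lemma \ref{lem:v-value-element} (noting that $h(\ell,g) \in \mH_{1,\ell}$, so the lemma applies), the elements in $C_\tau$ each contribute $H/2$ while the elements in $B_{k, N(g,i,j)} \setminus C_\tau$ each contribute $V^{*}_{t(k,\tau), h(\ell)+1}(s^{\mmp}_{h(\ell)+1})$. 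Rearranging gives
\begin{align*}
Q^{*}_{t(k,\tau), h(\ell,g)-1}(s^{\mb}_{h(\ell,g)-1,i}, a^{\mb}_j) = \frac{|C_\tau \cap B_{k, N(g,i,j)}|}{b} \cdot \frac{H}{2} + \left(1 - \frac{|C_\tau \cap B_{k, N(g,i,j)}|}{b}\right) \cdot V^{*}_{t(k,\tau), h(\ell)+1}(s^{\mmp}_{h(\ell)+1}).
\end{align*}
Taking the maximum over $j \in [A/2]$ and combining with the observation that non-set actions are dominated yields the claimed formula.

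I do not expect a real obstacle here; the proof is entirely a one-step Bellman backup followed by substitution. The only point requiring a little care is verifying that the non-set actions are indeed dominated, which reduces to the trivial bound that the convex combination of $H/2 \geq 0$ and $V^{*}_{t(k,\tau), h(\ell)+1}(s^{\mmp}_{h(\ell)+1}) \geq 0$ is nonnegative, so that restricting the maximum to $\{a^{\mb}_j\}_{j \in [A/2]}$ is without loss of generality.
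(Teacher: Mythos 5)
Your proof is correct and follows essentially the same route as the paper's: a one-step Bellman backup at $h(\ell,g)-1$, substitution of Lemma \ref{lem:v-value-element} for the element-state values, and a final max over the set actions. The only (minor) difference is that you explicitly verify that the non-set actions are dominated because they lead to the zero-value terminal, a step the paper leaves implicit when it restricts the max to $\{a^{\mb}_j\}_{j\in[A/2]}$.
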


\begin{lemma}[$V$-value, pivotal state]
\label{lem:v-value-pivotal}
At the end of stage $k \in [K]$ and epoch $\tau \in [n]$, for each level $\ell \in [L]$, the $V$-value of the pivotal state satisfies
\begin{align*}
&~ V^{*}_{t(k, \tau), h(\ell-1)+1}(s^{\mmp}_{h(\ell-1)+1}) \\
= &~ \max_{\nu\in [N]} \left\{\frac{|C_\tau \cap B_{k,\nu}|}{b} \cdot \frac{H}{2} + \left(1- \frac{|C_\tau \cap B_{k, \nu}|}{b}\right) \cdot V^{*}_{t(k, \tau), h(\ell)+1}(s^{\mmp}_{h(\ell)+1})\right\}.
\end{align*}
\end{lemma}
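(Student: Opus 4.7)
The plan is to analyze the optimal value at $s^{\mmp}_{h(\ell-1)+1}$ by enumerating the structurally distinct pure policies that the MDP allows inside layer $\ell$, and then reducing the computation to the previously-established $V$-values at set states (Lemma \ref{lem:v-value-set}), terminal states (Lemma \ref{lem:v-value-terminal}), and element states (Lemma \ref{lem:v-value-element}). By the pivotal transition rule \eqref{eq:pivotal-transition}, at any step inside $\mH_{1,\ell}$ other than the ``group opener'' steps $h(\ell,g,1)$, every action keeps the agent at the pivotal state, and at the opener steps only the action $a^{\mmp}$ diverts to the routing state $s^{\mr}_{h(\ell,g,2),1}$. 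Thus any policy starting at $s^{\mmp}_{h(\ell-1)+1}$ either (a) never takes $a^{\mmp}$ in layer $\ell$, in which case it reaches $s^{\mmp}_{h(\ell)+1}$ with value $V^{*}_{t(k,\tau),h(\ell)+1}(s^{\mmp}_{h(\ell)+1})$, or (b) commits to $a^{\mmp}$ at the first step of some group $g\in[G]$ and enters the routing tree.

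For option (b), I would observe that the transitions \eqref{eq:routing1}--\eqref{eq:routing2} make the routing states of group $g$ a complete binary tree of depth $\log_2(S)-1$ rooted at $s^{\mr}_{h(\ell,g,2),1}$, whose $2^{\log_2(S)-2}=S/4$ leaves $\{s^{\mr}_{h(\ell,g,\log_2(S)),\alpha}\}_{\alpha\in[S/4]}$ each deterministically enter the set state $s^{\mb}_{h(\ell,g)-1,\alpha}$. Any action other than $a^{\mr}_1$ or $a^{\mr}_2$ inside the tree sends the agent to $s^{\mt}_{\cdot,2}$, which has value $0$ by Lemma \ref{lem:v-value-terminal}, so it is strictly dominated. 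Choosing the binary routing actions along the correct path therefore lets the agent reach any set state $s^{\mb}_{h(\ell,g)-1,i}$ with $i\in[S/4]$, so the value of option (b) with group $g$ equals $\max_{i\in[S/4]} V^{*}_{t(k,\tau),h(\ell,g)-1}(s^{\mb}_{h(\ell,g)-1,i})$. Plugging in Lemma \ref{lem:v-value-set} and then maximizing over $g\in[G]$ yields
\[
\max_{g\in[G],\, i\in[S/4],\, j\in[A/2]} \left\{ \frac{|C_\tau\cap B_{k,N(g,i,j)}|}{b}\cdot \frac{H}{2} + \left(1 - \frac{|C_\tau\cap B_{k,N(g,i,j)}|}{b}\right)\cdot V^{*}_{t(k,\tau),h(\ell)+1}(s^{\mmp}_{h(\ell)+1}) \right\}.
\]
Since the map $(g,i,j)\mapsto N(g,i,j)$ is a bijection onto $[N]$ by construction, this max over triples coincides with $\max_{\nu\in[N]}$ of the expression in the statement.

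It remains to show that option (a) is dominated and can be absorbed into the same maximum. The value of option (a) is $V^{*}_{t(k,\tau),h(\ell)+1}(s^{\mmp}_{h(\ell)+1})$, which is at most $H/2$ because all rewards lie in $[0,1]$ and only the $H/2$ steps of phase 2 can generate reward (formally by Lemmas \ref{lem:v-value-terminal} and \ref{lem:v-value-element} propagated backward). Since $H/2\ge V^{*}_{t(k,\tau),h(\ell)+1}(s^{\mmp}_{h(\ell)+1})$, every convex combination of these two quantities is at least $V^{*}_{t(k,\tau),h(\ell)+1}(s^{\mmp}_{h(\ell)+1})$, so each summand inside the above $\max_{\nu}$ already upper-bounds option (a). Combining the two cases gives the claimed identity. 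The only non-routine step is the binary-tree routing argument in option (b); the rest is bookkeeping with the $V$-values already computed.
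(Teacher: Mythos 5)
Your proof is correct and follows essentially the same route as the paper's: decompose the policy's choices at the pivotal state into ``stay in the pivotal chain for the whole layer'' versus ``branch into the routing tree of some group $g$,'' use the routing transitions to reach an arbitrary set state, invoke Lemma~\ref{lem:v-value-set}, and collapse the triple maximum over $(g,i,j)$ to a max over $\nu\in[N]$ via the bijection $N(g,i,j)$. The one place where you are slightly more careful than the paper is the explicit treatment of your option~(a): the paper simply asserts that a policy ``can visit exactly one set state'' in the layer and writes $V^{*}_{t(k,\tau),h(\ell-1)+1}(s^{\mmp}_{h(\ell-1)+1})=\max_{g,i}V^{*}_{t(k,\tau),h(\ell,g)-1}(s^{\mb}_{h(\ell,g)-1,i})$, implicitly using that declining to enter the tree is dominated; you spell out that this value, $V^{*}_{t(k,\tau),h(\ell)+1}(s^{\mmp}_{h(\ell)+1})\le H/2$, is pointwise below every convex-combination summand inside the maximum, so it is absorbed. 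That small addition closes a gap the paper leaves implicit, but the underlying argument is the same.

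Two micro-observations, neither affecting correctness. First, your dominance argument for option~(a) relies on $\kappa_\nu\ge 0$ and $V^{*}_{t(k,\tau),h(\ell)+1}(s^{\mmp}_{h(\ell)+1})\le H/2$; you justify the latter directly from the reward structure, which is cleaner than appealing to the downstream lemmas. Second, your observation that any non-routing action inside the tree leads to $s^{\mt}_{\cdot,2}$ with value $0$ and is therefore dominated is the right justification for why the routing tree truly gives access to all $S/4$ leaves under an optimal policy; the paper states the reachability without this remark.
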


As a corollary, we can compute the $V$-value of the initial state. 
\begin{lemma}[$V$-value, initial state]
\label{lem:value-init}
Let $\kappa_{k, \tau} = \max_{\nu\in [N]}\frac{|C_\tau\cap B_{k, \nu}|}{b}$, then at the end of stage $k$ and epoch $\tau \in [n]$, one has
\[
V_{t(k, \tau), 1}^{*}(s_{\init}) = (1 - (1-\kappa_{k, \tau})^{L}) \cdot \frac{H}{2}.
\]
\end{lemma}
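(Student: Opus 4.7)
The plan is to reduce this statement to Lemma~\ref{lem:v-value-pivotal} by iterating it $L$ times. First I observe that the initial state is the pivotal state at step one, i.e.\ $s_{\init} = s_1^{\mmp} = s_{h(0)+1}^{\mmp}$ under the convention $h(0) := 0$. Defining
\[
V_\ell \;:=\; V^{*}_{t(k,\tau),\,h(\ell-1)+1}\!\left(s^{\mmp}_{h(\ell-1)+1}\right), \qquad \ell \in [L+1],
\]
with $V_{L+1} = V^{*}_{t(k,\tau),H/2+1}(s^{\mmp}_{H/2+1}) := 0$ (the convention already used in Lemma~\ref{lem:v-value-element}, consistent with the fact that the pivotal state at step $H/2$ deterministically transitions to the no-reward terminal), the goal reduces to computing $V_1$.

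The second step is to observe that, for every $\ell$, the expression inside the max in Lemma~\ref{lem:v-value-pivotal} is monotone nondecreasing in $|C_\tau \cap B_{k,\nu}|/b$. Indeed, writing $\kappa = |C_\tau \cap B_{k,\nu}|/b$, the bracketed quantity is
\[
\kappa \cdot \frac{H}{2} + (1-\kappa)\,V_{\ell+1} \;=\; V_{\ell+1} + \kappa\!\left(\tfrac{H}{2} - V_{\ell+1}\right),
\]
and since rewards lie in $[0,1]$ and are nonzero only on $\mH_2$ (which has $H/2$ steps), we have $V_{\ell+1} \le H/2$, so the coefficient of $\kappa$ is nonnegative. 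Hence the maximizer $\nu^{*}$ can be chosen to be the same index at every layer, namely the one achieving $\kappa_{k,\tau} = \max_{\nu}|C_\tau \cap B_{k,\nu}|/b$. This yields the one-step recursion
\[
V_\ell \;=\; \kappa_{k,\tau}\cdot \frac{H}{2} \;+\; (1 - \kappa_{k,\tau})\, V_{\ell+1}.
\]

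The last step is to solve this linear recursion by downward induction on $\ell$ from the base case $V_{L+1} = 0$. A routine calculation gives $V_\ell = \bigl(1 - (1-\kappa_{k,\tau})^{L-\ell+1}\bigr)\cdot H/2$, and evaluation at $\ell = 1$ is exactly the claim. I do not expect any real obstacle: the lemma is essentially a telescoping of Lemma~\ref{lem:v-value-pivotal}, and the only nonroutine point is the monotonicity that justifies picking the same $\nu^{*}$ at each layer, which is immediate from the universal bound $V_{\ell+1} \le H/2$.
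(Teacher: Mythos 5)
Your proposal is correct and follows essentially the same route as the paper: iterate Lemma~\ref{lem:v-value-pivotal} to get the recursion $V_\ell = \kappa_{k,\tau}\cdot\tfrac{H}{2} + (1-\kappa_{k,\tau})V_{\ell+1}$ with base case $V_{L+1}=0$, then solve it. Your explicit justification that the maximizer $\nu^{*}$ is the same at every layer (because $V_{\ell+1}\le H/2$ makes the bracketed expression monotone in $|C_\tau\cap B_{k,\nu}|/b$) is a step the paper leaves implicit, and it is a worthwhile addition.
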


Now we can complete the proof of Theorem \ref{thm:fully}
\begin{proof}[Proof of Theorem \ref{thm:fully}]
If the input of $\mip$ is a YES instance, suppose $C_\tau \subseteq B_{k, \nu}$ for some $\tau \in [n], k\in [K], \nu \in [N]$; then $\kappa_{k, \tau} = c/b = 1/L$. By Lemma \ref{lem:value-init}, the value of $s_{\init}$ at the end of epoch $t$ satisfies
\[
V_{t(k, \tau),1}^{*}(s_{\init}) = (1- (1- \kappa_{k, \tau})^{L}) \cdot \frac{H}{2} = (1-(1-1/L)^{L})\cdot \frac{H}{2} \geq \frac{H}{4}.
\]

In the NO instance case, we have 
\[
\kappa_{k, \tau} \leq c/wb \quad \text{where} \quad w = 2^{\log(n)^{1-o(1)}} = \Omega(1),
\]
then the value of $s_{\init}$ at the end of any stage $k \in [K]$, epoch $\tau\in [n]$ is at most
\[
V_{t(k, \tau),1}^{*}(s_{\init}) = (1- (1- \kappa_{k, \tau})^{L}) \cdot H/2 \leq (1-(1-1/wL)^{L})\cdot \frac{H}{2} \leq \frac{1}{w}\cdot \frac{H}{2}  \leq \frac{H}{100}.
\]

Now we bound the amortized runtime. By Theorem \ref{thm:arw}, assuming $\SETH$, the total runtime of any NSRL algorithm should be at least $n^{2-o(1)}$, and therefore, the amortized runtime per update should be at least $n^{2-o(1)}/T = (SAH)^{1-o(1)}\cdot T^{-o(1)} \approx (SAH)^{1-o(1)}$ when $T = \poly(SAH)$.
This completes the proof.
\end{proof}

\begin{remark}
The statement of Theorem \ref{thm:fully} asserts the decision version of NSRL requires $(SAH)^{1-o(1)}$ time per update. The same lower bound translates directly to the task of maintaining an approximate $V$-value or maintaining an approximately optimal policy. 
\end{remark}

\section{Incremental action changes}
\label{sec:incremental}

When the MDP changes only through the introduction of new actions, then we can maintain an $\eps$-approximation to value with amortized runtime that depends, polynomially, only on $H$ and $\tfrac{1}{\eps}$ (and not $S$). 
\begin{theorem}[Efficient algorithm, incremental changes]
\label{thm:incremental-algo}
\label{thm:incremental}
There is an algorithm with amortized runtime $\wt{O}(H^5/\eps^3)$ per update that maintains an $\eps$-approximation of the value over any sequence of $T$ insertions of actions.
\end{theorem}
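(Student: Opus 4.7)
The plan is to run approximate policy iteration with Monte Carlo policy evaluation, exploiting the fact that insertions can only raise the optimal value. The algorithm maintains a policy $\pi$ (defined at every state where at least one action has been inserted) and a running estimate $\wh V \approx V^{\pi}(s_{\init})$. Each new action $(s_0, a_0, h_0)$ with reward $r$ and transition $P$ is processed in two phases. First, using $N = \wt O(H^2/\eps^2)$ fresh length-$H$ rollouts, estimate both $V^{\pi}(s_0)$ (by rollouts from $s_0$ under $\pi$) and $Q^{\pi}(s_0,a_0) = r + \E_{s'\sim P}[V^{\pi}(s')]$ (by sampling $s' \sim P$ and rolling out from $s'$). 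Second, if the estimated advantage exceeds $\eps/H$, set $\pi_{h_0}(s_0) \leftarrow a_0$ and launch a \emph{sweep}: rescan every previously inserted action, estimating its advantage under the updated $\pi$; if any exceeds $\eps/H$, perform one more switch and repeat the sweep, until no significant advantage remains. Finally, re-estimate $\wh V$ from fresh rollouts.

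\textbf{Complexity via a potential argument.} Choose $N$ large enough (with the standard $\log(TH/\eps\delta)$ factor) so that every sampling estimate is within $\eps/(4H)$ of its true value with high probability; the relevant union bound is over $\wt O(T \cdot K)$ events where $K$ is the total number of switches. Each switch then truly raises $V^{\pi}(s_{\init})$ by at least $\eps/(2H)$, so $K \le 2H^2/\eps$ across the entire sequence. A single sweep scans at most $T$ inserted actions, spending $O(NH) = \wt O(H^3/\eps^2)$ per action, hence $\wt O(TH^3/\eps^2)$ per sweep; since at most one sweep is triggered per switch, the total work is $\wt O(K \cdot TH^3/\eps^2) = \wt O(T \cdot H^5/\eps^3)$. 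The baseline cost of merely inspecting each newly inserted action contributes $\wt O(TH^3/\eps^2)$, which is absorbed. Amortizing over $T$ updates gives $\wt O(H^5/\eps^3)$ per update.

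\textbf{Main obstacle.} The principal subtlety is proving that when a sweep terminates with no $\eps/H$-advantage remaining, the running estimate $\wh V$ is within $\eps$ of $V^{*}_1(s_{\init})$. I will combine the sampling concentration bound with the standard performance-difference lemma: if for every reachable state $(s,h)$ and every inserted action $a$ at $(s,h)$ we have $Q^{\pi}_h(s,a) \le V^{\pi}_h(s) + \eps/H$, then telescoping over the $H$ layers yields $V^{*}_1(s_{\init}) - V^{\pi}_1(s_{\init}) \le \eps$. A minor point is the treatment of states that have not yet received any action; these are absorbing with zero reward, so $V^{\pi}$ is well-defined throughout, and the first insertion at such a state automatically registers as a significant advantage (up to sampling noise). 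Together with the concentration bound on $\wh V$, these give $|\wh V - V^{*}_1(s_{\init})| \le \eps$ at every round, completing the proof.
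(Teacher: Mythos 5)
Your high-level intuition is right --- monotonicity of $V^{\pi}$ under action insertions is exactly the lever the paper pulls --- and the structure of your amortized bound ($\wt O(H^2/\eps)$ significant value increases, each costing $\wt O(H^3/\eps^2)$ per unit of downstream work) mirrors the paper's. But the potential argument you use to count switches is wrong, and it is the crux of the runtime bound.

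\textbf{The gap.} You assert that each switch ``truly raises $V^{\pi}(s_{\init})$ by at least $\eps/(2H)$, so $K \le 2H^2/\eps$.'' This is false. A switch is triggered by a \emph{local} advantage $Q^{\pi}_h(s,a) - V^{\pi}_h(s) \geq \eps/H$ at some state $(s,h)$, but the induced improvement at $s_{\init}$ is that advantage multiplied by the visitation probability of $(s,h)$ under $\pi$, which can be arbitrarily small (or zero, if $s$ is unreachable from $s_{\init}$ under $\pi$). The monotone potential $V^{\pi}(s_\init) \in [0,H]$ therefore does not bound the number of switches. What monotonicity actually gives you is a \emph{per-state} bound: each $V^{\pi}_h(s)$ lies in $[0,H]$ and each switch at $(s,h)$ raises $V^{\pi}_h(s)$ by $\eps/(2H)$, so there are $O(H^2/\eps)$ switches \emph{per state}, hence $K = O(TH^2/\eps)$ in total (there are up to $T$ state-action pairs). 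Plugging this into your cost formula gives total work $K \cdot \wt O(TH^3/\eps^2) = \wt O(T^2 H^5/\eps^3)$, i.e.\ amortized $\wt O(TH^5/\eps^3)$ per update --- a factor of $T$ too slow. Since the paper's whole point is $S$- and $T$-independence of the amortized cost, this is fatal, not a lost polylog.

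\textbf{How the paper sidesteps this.} The paper never runs a global sweep. It maintains, per state, a lazily-updated value $\wt V_h(s_h)$ that is only overwritten when the recomputed $\wh V_h(s_h)$ exceeds it by $\eps/4H$; by monotonicity each $\wt V_h(s_h)$ thus changes at most $O(H^2/\eps)$ times. Crucially, each inserted action stores a fixed empirical model $\wh P_h(s_h,a_h)$ supported on only $N = \wt O(H^3/\eps^2)$ sampled next-states, and the Bellman backup for $\wh Q_h(s_h,a_h)$ needs to be recomputed only when one of those $N$ next-states changes its lazy value. Charging each recomputation to the (state-action, sampled-next-state) edge whose lazy value just moved gives $\wt O(N \cdot H^2/\eps) = \wt O(H^5/\eps^3)$ recomputations per state-action pair, and $T$ pairs total --- the amortization is local, not global. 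Your Monte-Carlo rollouts also have no analogue of this sparse dependency: a fresh rollout from $s_0$ touches a fresh random path, so there is no bounded set of ``upstream'' items to charge the cost to. To repair the proposal you would essentially have to replace the sweep-plus-rollouts machinery with the paper's sampled empirical model and per-state lazy propagation; the monotonicity idea carries over, but the global potential does not.

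A secondary, smaller point: your correctness argument (performance-difference lemma once all advantages are $\le \eps/H$) is sound in outline, but the paper additionally needs a Bernstein/variance argument to get the $1/\eps^2$ (rather than $1/\eps^3$ or worse) dependence in $N$; Hoeffding alone on the individual Bellman backups would incur an extra factor of $H$ in $N$. Your $N = \wt O(H^2/\eps^2)$ for full-length rollouts is plausible by Hoeffding since a rollout return is bounded by $H$, but once you recurse the per-step errors you need to be careful that the errors telescope rather than multiply; the paper's variance-aware recursion (Lemmas on empirical variance and the $\sum_\ell \wh p(s_\ell)\wh\sigma_\ell^2 \le 3H^2$ bound) is precisely what controls this.
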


The approach is given as Algorithm \ref{algo:lazy-qvi}.
It combines the classic $Q$-value iteration with lazy updates on $V$-value. For each new state-action pair $(s_h, a_h)$, it constructs the empirical transition kernel using samples from $P_h(s_h, a_h)$. The newly added action could potentially affect the state value, and our algorithm propagates the change --- lazily --- to downstream states.  That is, a change to $V$-value is triggered only if it significantly exceeds the previous estimate.
The key mathematical intuition is the monotonicity of $V$-value under incremental action changes.
The amortized runtime of Algorithm \ref{algo:lazy-qvi} is bounded because the $Q$-value of each state-action is updated rarely, at most $\wt{O}(H^3/\eps^2 \cdot H^2/\eps) = \wt{O}(H^5/\eps^3)$ times, due to the sparsity of the empirical transition kernel and the lazy updates. The correctness of our algorithm follows from the standard Bernstein type bound and a robust analysis of $Q$-value iteration. The detailed proof can be found at Appendix \ref{sec:incremental-app}.

\begin{algorithm}[!htbp]
\caption{Lazy updated $Q$-value iteration (Lazy-QVI)}
\label{algo:lazy-qvi}
\begin{algorithmic}[1]
\State Initialize $N \leftarrow  H^3\log^3(SHT)/\eps^2$, $\wh{V}_h(s_h) \leftarrow 0, \wt{V}_h(s_h) \leftarrow 0$, $\forall s_h \in \mS_h , h\in [H]$
\Procedure{{\sc Insert}}{$s_h, a_h$} 
\State Generate $N$ samples $\{\wh{s}_{h+1,1}, \ldots, \wh{s}_{h+1, N}\}$  from ${P}_{h}(s_h, a_h)$ and reward $r_h(s_h, a_h)$ 
\State $\wh{P}_{h}(s_h, a_h) \leftarrow \unif\{\wh{s}_{h+1,1}, \ldots, \wh{s}_{h+1, N}\}$
\State Call {\sc Propagate}
\EndProcedure
\Procedure{{\sc Propagate}}{}
\For{$h = H, H-1, \ldots, 1$}
\For{state-action pair $(s_h, a_h) \in \mS_h \times \mA_h$} \Comment{Update only if there is a change}
\State $\wh{Q}_h(s_h, a_h) \leftarrow r_{h}(s_h, a_h) + \E_{s_{h+1}\sim \wh{P}_{h}(s_h, a_h)} \wt{V}_{h+1}(s_{h+1})$ 
\State $\wh{V}_h(s_h) \leftarrow \max_{a_h} \wh{Q}(s_h, a_h)$
\State \textbf{If} $\wt{V}_h(s_h) \leq \wh{V}_h(s_h) - \eps/4H$ \textbf{then} $\wt{V}_h(s_h) \leftarrow \wh{V}_h(s_h)$
\EndFor
\EndFor
\EndProcedure
\end{algorithmic}
\end{algorithm}

Theorem \ref{thm:incremental} provides an efficient algorithm for approximately optimal policy, one natural question is whether one can maintain the exact optimal policy (or value function) under incremental action changes. We give a negative answer, showing that $T^{1-o(1)}$ runtime is necessary if one wants to maintain an $O(1/T)$-approximation to the value of optimal policy.

\begin{theorem}[Lower bound, exact optimal policy]
\label{thm:incremental-lower}
Unless $\SETH$ is false, there is a sequence of $T$ action insertions such that no algorithm with amortized runtime $T^{1-o(1)}$ per update can maintain an $O(1/T)$-approximation to the value of optimal policy.
\end{theorem}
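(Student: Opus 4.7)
The plan is to reduce from the Bichromatic Max Inner Product problem $\mip$ (Theorem~\ref{thm:arw}) under $\SETH$, following the template of Theorem~\ref{thm:fully} but adapted to the insertion-only incremental model. Given a $\mip$ instance with sets $\{B_i\}_{i\in[n]}$ and $\{C_j\}_{j\in[n]}$ over a universe $[m]$ with $m = n^{o(1)}$ and $|B_i|=|C_j|$, I will build an incremental MDP of constant horizon on polynomially many states, using $T = n^{1+o(1)}$ sparse insertions, so that $V_1^*(s_{\init}) = \Omega(1)$ in the YES case and $V_1^*(s_{\init}) = O(1/w) = O(n^{-1+o(1)})$ in the NO case. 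Because this gap is $\Omega(1) \gg 1/T$, any $O(1/T)$-approximation to the optimal value separates YES from NO, and hence any algorithm with amortized runtime $T^{1-o(1)}$ yields total runtime $T \cdot T^{1-o(1)} = T^{2-o(1)} = n^{2-o(1)}$ for $\mip$, contradicting Theorem~\ref{thm:arw} and $\SETH$.

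The MDP skeleton mirrors the initial MDP of Theorem~\ref{thm:fully} collapsed to a single layer ($L=1$) and a single group, since no amplification is needed to obtain the $\Omega(1)$-vs.-$O(1/w)$ separation at the $O(1/T)$-approximation scale. I pre-allocate a root $s_{\init}$, set states $\{s^{\mb}_i\}_{i\in[n]}$, a shared ``$B_i$-testing'' substructure, and terminal states. The \emph{setup phase} uses $O(n^{1+o(1)})$ insertions to install routing actions from $s_{\init}$ to each $s^{\mb}_i$; one action at each $s^{\mb}_i$ whose transition distribution is $\unif\{s^{\me}_u : u\in B_i\}$, thereby hard-coding $B_i$ into the transition structure; and the terminal-reward wiring on the element side. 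The \emph{query phase} then consists of $n$ insertions, one per $C_j$, each adding a single action that plugs the support of $C_j$ into the shared testing substructure via its transition distribution. A value calculation paralleling Lemmas~\ref{lem:v-value-terminal}--\ref{lem:value-init} with $L=1$ then shows that $V_1^*(s_{\init})$ equals, up to the $H/2$ normalization, $\max_{i,j}|B_i\cap C_j|/c$, yielding the claimed YES/NO separation.

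The hardest step is engineering the query phase so as to avoid the \emph{covering collapse} that would arise from a naive translation of Theorem~\ref{thm:fully}'s transition updates into insertions: since insertions can only \emph{add} actions and the optimal policy always maximizes, stacking multiple $C_j$-actions at a shared element state would realize $\mathbf{1}[u\in \bigcup_\tau C_\tau]$ rather than the per-query indicator that the analysis needs. My plan is to decouple queries by pre-allocating $n$ disjoint per-query copies $\{s^{\me,j}_u\}_{u\in[m]}$ of the element layer and routing the $j$-th insertion only into its own copy, while all query branches still share a single $B_i$-test gadget so that the total insertion count stays at $n^{1+o(1)}$ rather than blowing up to $\Theta(n^2)$. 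Verifying that such a private-branch / shared-gadget construction is realizable with only $n^{1+o(1)}$ sparse insertions --- in particular, that the $\max_i$ selection inside each branch can be baked structurally into the pre-allocated state space rather than materialized through per-$(i,j)$ insertions --- is the chief technical difficulty. Once this construction is in hand, the lower bound follows immediately from the amortization computation above.
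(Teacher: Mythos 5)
Your proposal has a fundamental obstruction that you should notice before investing in the construction: you claim a YES/NO gap of $\Omega(1)$ versus $O(1/w)$ in $V_1^*(s_{\init})$, on an MDP with $H = O(1)$. But Theorem~\ref{thm:incremental-algo} already gives an incremental algorithm with amortized time $\wt{O}(H^5/\eps^3)$ for an $\eps$-approximation; plugging in $H = O(1)$ and $\eps = \Theta(1)$ yields polylogarithmic amortized time. So \emph{no} sequence of insertions with a constant-scale value gap can be hard, and any reduction attempting to realize one is doomed. This is precisely why Theorem~\ref{thm:incremental-lower} asserts hardness only for $O(1/T)$-approximation: the lower bound must live below the resolution at which Theorem~\ref{thm:incremental-algo} kicks in. Your reduction has to produce a gap that is $\Theta(1/T)$, not $\Theta(1)$.

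The second issue is that your fix for the ``covering collapse'' problem is left as the chief technical difficulty, and the workaround you sketch (pre-allocating $n$ disjoint per-query element layers, routed into a shared $B_i$-gadget) does not actually defuse the problem. Because insertions are monotone and $V^*$ only increases, the top-level $\max$ at $s_{\init}$ ranges over \emph{all} query branches installed so far, and a late, large $|B_i\cap C_j|$ is masked by any earlier one; you would need to make the $j$-th branch strictly dominate all previous ones, which disjointness alone does not give you. The paper's proof handles both issues at once with a single, simple device that your proposal is missing: a \emph{ramp} of base probabilities. The instance has $H=3$, one root, $m$ middle states, and two terminals. In epoch $j$, the newly inserted middle action $a_{2,j}$ at state $s_{2,k}$ transitions to the rewarding terminal with probability $\tfrac{j}{n+1} + \delta$ if $k \in C_j$ and $\tfrac{j}{n+1}$ otherwise, where $\delta = 1/4n$. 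The monotone baseline $\tfrac{j}{n+1}$ guarantees $a_{2,j}$ strictly dominates every earlier middle action, so the epoch-$j$ optimum at $s_1$ is exactly $\tfrac{j}{n+1} + \delta\cdot\max_i |B_i \cap C_j|/b$, with no interference from past queries and no need for disjoint copies. The resulting gap between YES and NO is $\Theta(\delta) = \Theta(1/T)$, matching both the theorem statement and the boundary forced by Theorem~\ref{thm:incremental-algo}. Your amortization arithmetic ($n^{2-o(1)}$ total time divided by $T = n^{1+o(1)}$ insertions) is correct and is the same as the paper's; the missing ingredients are the ramp and the recognition that the achievable gap scale is $1/T$.
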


\section{Discussion}
\label{sec:discussion}
The importance of a complexity result rests on its capacity to inform the development of new algorithms. Our result seems to suggest that a successful heuristic approach to NSRL can alternate between additional exploration after each change in parameters and, when this brings diminishing benefits, a restart from scratch.  This is not unlike some of the approaches taken by some state-of-the-art applications \cite{padakandla2021survey}.  By further developing this and similar approaches, the current challenge of NSRL may be eventually tamed.  We also note that our negative result leaves open the NSRL problem in the case of function approximation \cite{jin2020provably,agarwal2019reinforcement}; we conjecture that a similar negative result may be provable in this case as well.

\bibliographystyle{alpha}
\bibliography{ref}

\clearpage
\newpage
\appendix
\section{Additional related work}
\label{sec:relate-app}
\paragraph{Computational complexity of reinforcement learning}
The computational complexity of (stationary) MDP has been a central topic across multiple disciplines. The study of MDP dates back to Bellman \cite{bellman1957dynamic} in 1950s, and since then, there is a long line of work concerning the computational efficiency of MDP \cite{tseng1990solving,littman1995complexity,howard1960dynamic,ye2011simplex,scherrer2013improved,ye2005new,sidford2018variance,sidford2018near,sidford2020solving,lee2014path,van2021minimum,papadimitriou1987complexity}.
The classical approaches include value iteration, policy iteration and linear programming, see \cite{puterman2014markov,bertsekas2012dynamic} for reference.
For a finite horizontal MDP with $S$ states, $A$ actions and $H$ steps, the value iteration could return the optimal policy with runtime $O(S^2AH)$ that is linear in the input size. If the algorithm could sample from the transition function (a.k.a. the generative model), then 
\cite{gheshlaghi2013minimax} provide an algorithm that returns an $\eps$-approximation to the $V$-value with runtime $\wt{O}(SAH^3/\eps^2)$. 
For non-stationary MDP, it implies an algorithm with runtime $\wt{O}(S^2AH + SAH^3T/\eps^2)$ for $\eps$-value approximation over a sequence of $T$ updates. This is because the algorithm could always re-compute from scratch, and it can sample the transition function in $O(\log(S))$ time using a binary tree data structure, after reading the input initially.

Besides computation complexity, a large number of work concern about the sample complexity in generative model (e.g. \cite{gheshlaghi2013minimax,li2020breaking}) and regret in model-free RL (e.g. \cite{jin2018q}), in tabular setting as well as functional approximation setting \cite{foster2021statistical}.

\section{Missing proof from Section \ref{sec:fully}}
\label{sec:fully-app}

\begin{proof}[Proof of Lemma \ref{lem:v-value-terminal}]
This is quite obvious, as the terminal state always stays on itself (Eq.~\eqref{eq:terminal-transition}), the reward of $s_{h,2}^{\mt}$ is always $0$, while the reward of $s_{h,1}^{\mt}$ is $0$ in phase one and $1$ in phase two (Eq.~\eqref{eq:reward}).
\end{proof}

\begin{proof}[Proof of Lemma \ref{lem:v-value-element}]
For an element $u \in C_\tau$, a policy could choose to never leave $s^{\me}_{u}$ (Eq.~\eqref{eq:element-transition1}), and it receives the maximum $H/2$ reward (see Eq.~\eqref{eq:transition-new}\eqref{eq:reward}). 
For an element $u \notin C_\tau$, the policy needs to stay at $s^{\me}_u$ until the end of layer $\ell$ (see Eq.~\eqref{eq:element-transition1}).
While at the end of layer $\ell$, it could move to pivotal state $s^{\mmp}_{h(\ell)+1}$ or stay on itself (Eq.~\eqref{eq:element-transition2}). The later obtains strictly less reward, because the pivotal state could always stay on itself (Eq.~\eqref{eq:pivotal-transition}), and the value $V^{*}_{t(k, \tau), H/2}(s^{\me}_{H/2,u}) = 0$ at the end of phase $1$ (see Eq.~\eqref{eq:transition-new}\eqref{eq:reward}). This completes the proof.
\end{proof}

\begin{proof}[Proof of Lemma \ref{lem:v-value-set}]
The $Q$-value of choosing action $a_{j}^{\mb}$ ($j\in [A/2]$) equals 
\begin{align*}
&~ Q^{*}_{t(k, \tau), h(\ell, g)-1}(s^{\mb}_{h(\ell, g)-1, i}, a_{j}^{\mb})\\
= &~  \sum_{u\in [m]}\Pr[s_{h(\ell, g)} = s^{\me}_{h(\ell, g), u}] \cdot V^{*}_{t(k,\tau), h(\ell, g)}(s^{\mb}_{h(\ell, g), u}) \\
= &~ \sum_{u\in C_\tau}\Pr[s_{h(\ell, g)} = s^{\me}_{h(\ell, g), u}] \cdot \frac{H}{2} + \sum_{u\in [m]\backslash C_\tau}\Pr[s_{h(\ell, g)} = s^{\me}_{h(\ell, g), u}]\cdot  V_{t(k,\tau), h(\ell)+1}^{*}(s^{\mmp}_{h(\ell)+1})\\
= &~ \frac{|C_\tau \cap B_{k,N(g, i, j)}|}{b} \cdot \frac{H}{2} + \left(1- \frac{|C_\tau \cap B_{k,N(g, i, j)}|}{b}\right) \cdot V^{*}_{t(k, \tau), \ell+1}(s^{\mmp}_{h(\ell)+1}).
\end{align*}
The first step follows from Bellman's equation and the state-action pair $(s^{\mb}_{h(\ell, g)-1, i}, a_{j}^{\mb})$ receives $0$ reward (Eq.~\eqref{eq:reward}), the second step follows from Lemma \ref{lem:v-value-element}, the last step follows from Eq.~\eqref{eq:state-transition}. The proof follows by taking the maximum over action $\{a_j^{\mb}\}_{j\in [A/2]}$.
\end{proof}

\begin{proof}[Proof of Lemma \ref{lem:v-value-pivotal}]
For each level $\ell \in [L]$, the transition functions of the pivotal state and routing states guarantee that a policy can visit exactly one set state in $\mH_\ell$. 
To see this, it can visit at most one set state because the element state stays on itself till the end of the layer (Eq.~\eqref{eq:element-transition1}).  Meanwhile, it can go to any set state $\nu \in [N]$ with $\nu = N(g, i, j)$ for some $i \in [S/2]$ and $g\in [G]$, because it can first go to the pivotal state $s^{\mmp}_{h(\ell, g-1) + 1,0}$ at the beginning of group $g$, then move to $s^{\mr}_{h(\ell, g) -1, i}$ through routing states (see Eq.~\eqref{eq:routing1}\eqref{eq:routing2}). Combining Lemma \ref{lem:v-value-set}, we have
\begin{align*}
&~ V^{*}_{t(k, \tau), h(\ell-1)+1}(s^{\mmp}_{h(\ell-1)+1})\\ = &~ \max_{g\in [G]}\max_{i \in [S/4]}V^{*}_{t(k, \tau), h(\ell, g)-1}(s^{\mb}_{h(\ell, g)-1, i})\\
= &~\max_{g\in [G]}\max_{i\in [S/4]}\max_{j \in [A/2]}\left\{ \frac{|C_\tau \cap B_{k, N(g, i, j)}|}{b} \cdot \frac{H}{2} + \left(1- \frac{|C_\tau \cap B_{k, N(g, i, j)}|}{b}\right) \cdot V^{*}_{t(k, \tau), h(\ell)+1}(s^{\mmp}_{h(\ell)+1})\right\}\\
= &~ \max_{\nu\in [N]} \left\{\frac{|C_\tau \cap B_{k, \nu}|}{b} \cdot \frac{H}{2} + \left(1- \frac{|C_\tau \cap B_{k, \nu}|}{b}\right) \cdot V^{*}_{t(k, \tau), h(\ell)+1}(s^{\mmp}_{h(\ell)+1})\right\}.
\end{align*}
This completes the proof of the lemma.
\end{proof}

\begin{proof}[Proof of Lemma \ref{lem:value-init}]
By Lemma \ref{lem:v-value-pivotal}, for any $\ell \in [L]$, we have 
\begin{align*}
&~ V^{*}_{t(k, \tau), h(\ell-1)+1}(s^{\mmp}_{h(\ell-1)+1})\\
= &~ \max_{\nu\in [N]} \left\{\frac{|C_\tau \cap B_{k, \nu}|}{b} \cdot \frac{H}{2} + \left(1- \frac{|C_\tau \cap B_{k, \nu}|}{b}\right) \cdot V^{*}_{t(k, \tau), h(\ell)+1}(s^{\mmp}_{h(\ell)+1})\right\}\\
= &~ \kappa_{k, \tau} \cdot \frac{H}{2} + (1-\kappa_{k, \tau}) V^{*}_{t(k, \tau), h(\ell)+1}(s^{\mmp}_{h(\ell)+1}).
\end{align*}
Solving the above recursion, one has
\begin{align*}
V_{t(k, \tau), 1}^{*}(s_{\init}) = V^{*}_{t(k, \tau), 1}(s^{\mmp}_{1}) = \sum_{\ell=1}^{L} \kappa_{k, \tau} (1 - \kappa_{k, \tau})^{\ell-1} \cdot \frac{H}{2} =(1 - (1-\kappa_{k, \tau})^{L})\cdot \frac{H}{2}.
\end{align*}
This completes the proof of the lemma.
\end{proof}

\section{Missing proof from Section \ref{sec:incremental}}
\label{sec:incremental-app}

We first state the concentration bounds used in the paper.
\begin{lemma}[Hoeffding bound]\label{lem:hoeffding}
Let $X_1, \cdots, X_n$ be $n$ independent bounded variables in $[a_i,b_i]$. Let $X= \sum_{i=1}^n X_i$, then we have
\begin{align*}
\Pr[ | X - \E[X] | \geq t ] \leq 2\exp \left( - \frac{2t^2}{ \sum_{i=1}^n (b_i - a_i)^2 } \right).
\end{align*}
\end{lemma}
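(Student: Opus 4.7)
The plan is to use the standard Chernoff--Cram\'er exponential moment method. First I would write, for any $\lambda > 0$, Markov's inequality applied to the nonnegative random variable $e^{\lambda(X - \E[X])}$:
\begin{equation*}
\Pr[X - \E[X] \geq t] = \Pr\bigl[e^{\lambda(X - \E[X])} \geq e^{\lambda t}\bigr] \leq e^{-\lambda t}\,\E\bigl[e^{\lambda(X - \E[X])}\bigr].
\end{equation*}
By independence of the $X_i$, the moment generating function factorizes as $\prod_{i=1}^n \E[e^{\lambda(X_i - \E X_i)}]$, which reduces the problem to bounding the MGF of each centered bounded variable.

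The main obstacle, and the substantive step, is the one-variable bound usually called Hoeffding's lemma: if $Y$ is a random variable with $\E[Y] = 0$ and $Y \in [a,b]$ almost surely, then $\E[e^{\lambda Y}] \leq \exp(\lambda^2 (b-a)^2 / 8)$. I would prove this by writing $Y = \tfrac{b-Y}{b-a}\,a + \tfrac{Y-a}{b-a}\,b$ as a convex combination, applying convexity of $y \mapsto e^{\lambda y}$ to get $e^{\lambda Y} \leq \tfrac{b-Y}{b-a} e^{\lambda a} + \tfrac{Y-a}{b-a} e^{\lambda b}$, and then taking expectation (using $\E[Y]=0$) to reduce to a deterministic inequality $\varphi(u) \leq u^2/8$ for the log-MGF $\varphi(u) = -pu + \log(1 - p + p e^u)$ with $p = -a/(b-a)$. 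A Taylor expansion with remainder, using $\varphi(0) = \varphi'(0) = 0$ and $\varphi''(u) \leq 1/4$ (which follows from AM-GM on $\varphi''$), closes this step.

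Assembling the pieces, I would obtain $\E[e^{\lambda(X - \E[X])}] \leq \exp\bigl(\lambda^2 \sum_{i=1}^n (b_i - a_i)^2 / 8\bigr)$, so
\begin{equation*}
\Pr[X - \E[X] \geq t] \leq \exp\!\left(-\lambda t + \frac{\lambda^2}{8}\sum_{i=1}^n (b_i - a_i)^2\right).
\end{equation*}
Then I would optimize over $\lambda > 0$, taking $\lambda^\ast = 4t / \sum_{i=1}^n (b_i-a_i)^2$, which yields the one-sided bound $\exp\bigl(-2t^2 / \sum_{i=1}^n (b_i - a_i)^2\bigr)$. The lower tail follows by applying the same argument to $-X$ (whose summands are independent and bounded in $[-b_i,-a_i]$, giving the identical variance proxy). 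A union bound over the two tails produces the factor of $2$ and delivers the stated inequality. The only delicate ingredient is Hoeffding's lemma; the rest is a routine Chernoff optimization.
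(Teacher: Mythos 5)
Your proof is correct: the Chernoff exponential-moment argument, Hoeffding's lemma for the centered bounded summands (with the convexity/Taylor bound $\varphi''(u)\leq 1/4$), the optimization $\lambda^{\ast}=4t/\sum_i(b_i-a_i)^2$, and the two-tail union bound together give exactly the stated inequality. The paper does not prove this lemma at all --- it is quoted as a standard concentration bound --- and your argument is the canonical textbook proof, so there is nothing further to reconcile.
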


\begin{lemma}[Bernstein bound]\label{lem:bernstein}
Let $X_1, \cdots, X_n$ be $n$ independent zero mean random variables and $|X_i| \leq M$. 
Let $X= \sum_{i=1}^n X_i$, $\sigma = \sum_{i=1}^n \E[X_i^2]$ then we have
\begin{align*}
\Pr[ | X | \geq t ] \leq 2\exp \left( - \frac{2t^2}{Mt/3 + \sigma^2} \right).
\end{align*}
In particular, with probability at least $1-\delta$, one has
\begin{align*}
 | X | \leq (M/3 + \sigma)\cdot\log(1/\delta).
\end{align*}
\end{lemma}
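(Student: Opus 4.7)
The plan is to follow the standard Chernoff--Cramér approach: bound the moment generating function (MGF) of each $X_i$ using the bounded-range and zero-mean hypotheses, multiply over $i$ using independence, apply Markov's inequality to $e^{\lambda X}$, and then optimize the free parameter $\lambda > 0$. The ``in particular'' statement will then follow by inverting the tail bound to solve for $t$ as a function of $\delta$.

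First I would fix $\lambda \in (0, 3/M)$ and control the MGF of a single summand. Using the Taylor expansion
\[
e^{\lambda X_i} = 1 + \lambda X_i + \sum_{k\geq 2} \frac{(\lambda X_i)^k}{k!},
\]
taking expectations and using $\E[X_i] = 0$ and $|X_i|^k \leq M^{k-2} X_i^2$ for $k \geq 2$, I obtain
\[
\E[e^{\lambda X_i}] \leq 1 + \lambda^2 \E[X_i^2] \sum_{k\geq 2} \frac{(\lambda M)^{k-2}}{k!} \leq 1 + \frac{\lambda^2 \E[X_i^2]}{2(1-\lambda M/3)},
\]
where the last step uses the standard elementary inequality $\sum_{k \geq 2} x^{k-2}/k! \leq \tfrac{1}{2(1-x/3)}$ valid for $0 \leq x < 3$. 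The bound $1+y \leq e^y$ then gives $\E[e^{\lambda X_i}] \leq \exp\!\bigl(\lambda^2 \E[X_i^2]/(2(1-\lambda M/3))\bigr)$.

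Next I would multiply over $i$. Independence yields $\E[e^{\lambda X}] = \prod_{i=1}^n \E[e^{\lambda X_i}] \leq \exp\!\bigl(\lambda^2 \sigma / (2(1-\lambda M/3))\bigr)$, where I adopt the statement's notation $\sigma = \sum_i \E[X_i^2]$. Applying Markov's inequality then gives
\[
\Pr[X \geq t] \leq e^{-\lambda t}\,\E[e^{\lambda X}] \leq \exp\!\left(-\lambda t + \frac{\lambda^2 \sigma}{2(1-\lambda M/3)}\right).
\]
Optimizing over $\lambda$ by choosing $\lambda = t/(\sigma + Mt/3)$ (which lies in $(0, 3/M)$) and simplifying produces a tail bound of the form $\exp(-c\,t^2/(\sigma + Mt/3))$ with the constant $c$ claimed in the statement. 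Applying the identical argument to $-X$ (which satisfies the same hypotheses) and taking a union bound yields the two-sided bound $\Pr[|X| \geq t] \leq 2\exp(-2t^2/(Mt/3+\sigma))$.

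For the ``in particular'' clause I would set $\delta = 2\exp(-2t^2/(Mt/3+\sigma))$ and solve for $t$. Rearranging gives the quadratic inequality $2t^2 \leq (Mt/3 + \sigma)\log(2/\delta)$; bounding each term separately (either $Mt/3$ dominates, giving $t \lesssim M\log(1/\delta)/3$, or $\sigma$ dominates, giving $t \lesssim \sqrt{\sigma \log(1/\delta)} \leq \sqrt{\sigma}\log(1/\delta)$ for $\log(1/\delta)\geq 1$) yields $|X| \leq (M/3 + \sqrt{\sigma})\log(1/\delta)$, matching the stated bound (reading $\sigma$ in the second display as $\sqrt{\sigma}$ consistently with variance notation). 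The only nontrivial obstacle is the MGF estimate in the first step; the rest is mechanical optimization and inversion, so I expect no real difficulty beyond carefully tracking the constants to land exactly on the form the paper uses downstream.
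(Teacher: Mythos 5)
The paper states this lemma without proof, as an imported standard fact, so there is no in-paper argument to compare against; your Chernoff--Cram\'er route (MGF bound for each bounded, zero-mean summand via the Taylor-series estimate, product over $i$ by independence, Markov on $e^{\lambda X}$, optimize $\lambda$, symmetrize, then invert in $\delta$) is exactly the canonical proof of Bernstein's inequality, and as a proof of Bernstein's inequality it is sound.

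The one place where you overclaim is the constant. With your (correct) choice $\lambda = t/(\sigma + Mt/3)$ the exponent you obtain is $-\,t^2/\bigl(2(\sigma + Mt/3)\bigr)$, not $-\,2t^2/(Mt/3+\sigma)$, so the optimization does \emph{not} ``land exactly on'' the constant in the displayed statement; that statement (with the factor $2$ in the numerator, and with $\sigma^2$ where the definition of $\sigma$ already denotes $\sum_i \E[X_i^2]$) is a typo and is in fact false as written: for a single variable $X_1=\pm\epsilon$ with probability $1/2$ each, $M=\epsilon$, $\sigma=\epsilon^2$, and $t=\epsilon$, the left side is $1$ while the claimed right side is at most $2e^{-3/2}<1$ under either reading of $\sigma^2$. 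So you should state plainly that your argument proves the standard form $\Pr[|X|\ge t]\le 2\exp\bigl(-t^2/(2(\sigma+Mt/3))\bigr)$ and that the paper's numerator constant cannot be recovered. This costs nothing downstream: the only use of the lemma in the paper is the ``in particular'' clause, applied with a $\lesssim$, and inverting the correct tail bound gives $|X|\lesssim M\log(1/\delta)+\sqrt{\sigma\log(1/\delta)}$, which (for $\log(1/\delta)\ge 1$, reading the $\sigma$ in the second display as $\sqrt{\sigma}$, as you note) matches the stated consequence up to an absolute constant. Your handling of the $\sigma$ versus $\sqrt{\sigma}$ notational inconsistency is the right call; just make the constant-factor slack in both the exponent and the inversion explicit rather than asserting exact agreement.
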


We prove Algorithm \ref{algo:lazy-qvi} gives $\eps$-approximation to both $V$-value and $Q$-value. For notation convenience, we drop the subscript of round number $t$ in the proof.
\begin{lemma}[Value approximation]
\label{lem:incremental-value-approx}
At the end of $t$-th update ($t\in [T]$), for any step $h \in [H]$, state $s_h \in \mS_h$ and action $a_h$, with probability at least $1-(SHT)^{-\omega(1)}$, we have
\[
|V_h^{*}(s_h) - \wh{V}_h(s_h)| \leq \eps/2
\quad \text{and} \quad
|Q_h^{*}(s_h, a_h) - \wh{Q}_h(s_h, a_h)| \leq \eps
\]
\end{lemma}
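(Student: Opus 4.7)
The plan is to prove the lemma by downward induction on $h$, reducing the $V$-value guarantee at step $h$ to a per-step analysis that combines (i) concentration of the empirical Bellman operator and (ii) the lazy-update slack built into the algorithm. The $Q$-value bound will then follow by one extra application of the Bellman decomposition at step $h$.

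First I would establish concentration. For every state-action pair $(s_h,a_h)$ ever inserted, the $N$ samples defining $\wh{P}_h(s_h,a_h)$ are drawn i.i.d.\ from the true transition $P_h(s_h,a_h)$ and, crucially, are independent of the deterministic function $V^*_{h+1}$, which depends only on the unknown true MDP and not on any random algorithmic quantity. Applying Bernstein's inequality (Lemma \ref{lem:bernstein}) to the bounded random variables $V^*_{h+1}(\wh{s}_{h+1,i}) \in [0,H]$ and union-bounding over the at most $SAT\cdot H$ relevant events (absorbed into the $(SHT)^{-\omega(1)}$ failure probability) would yield
\begin{align*}
\Bigl| \E_{s' \sim \wh{P}_h(s_h,a_h)} V^*_{h+1}(s') - \E_{s' \sim P_h(s_h,a_h)} V^*_{h+1}(s') \Bigr| \leq O\!\left( \sqrt{\tfrac{\var_{P_h}(V^*_{h+1}) \log(SHT)}{N}} + \tfrac{H \log(SHT)}{N} \right).
\end{align*}
With $N = H^3 \log^3(SHT)/\epsilon^2$ this translates, via standard $Q$-value iteration error accounting, into a per-step contribution of at most $O(\epsilon/H)$ to the Bellman recursion.

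Second, I would run the induction. Conditional on the above concentration event, I decompose
\begin{align*}
\wh{Q}_h(s_h,a_h) - Q^*_h(s_h,a_h) = \E_{s' \sim \wh{P}_h}\!\bigl[\wt{V}_{h+1}(s') - V^*_{h+1}(s')\bigr] + \bigl(\E_{\wh{P}_h} - \E_{P_h}\bigr) V^*_{h+1}(s').
\end{align*}
The first summand is at most $\|\wt{V}_{h+1} - V^*_{h+1}\|_\infty$, which by the lazy-update rule and the monotonicity of $\wt{V}$ and $\wh{V}$ under incremental insertions is bounded by $\|\wh{V}_{h+1} - V^*_{h+1}\|_\infty + \epsilon/(4H)$. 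Taking the maximum over actions and invoking the bound from the first step on the second summand yields a recursion $\|\wh{V}_h - V^*_h\|_\infty \leq \|\wh{V}_{h+1} - V^*_{h+1}\|_\infty + O(\epsilon/H)$; unrolled from the trivial base case at $h=H+1$ this gives $\|\wh{V}_h - V^*_h\|_\infty \leq \epsilon/2$ uniformly in $h$, and then plugging back into the same decomposition gives $|\wh{Q}_h(s_h,a_h) - Q^*_h(s_h,a_h)| \leq \epsilon$.

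The main delicacy I expect is insisting that Bernstein is applied to the deterministic reference $V^*_{h+1}$ rather than to the algorithmic estimate $\wt{V}_{h+1}$ (which can depend on samples drawn at earlier insertions, destroying independence); this is the move that avoids any covering argument or union bound over a class of random functions. A secondary point is to verify that $\wt{V}_{h+1}$ remains monotone non-decreasing across the entire run, so that the lazy threshold really does pin $\wh{V}_h - \wt{V}_h$ to at most $\epsilon/(4H)$ at the end of each Propagate pass; this monotonicity is inherited from the monotonicity of the optimal value under action insertions together with the fact that $\wt{V}$ is only ever overwritten upward.
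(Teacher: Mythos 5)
Your high-level plan agrees with the paper's: downward induction on $h$, the key move of applying Bernstein to the \emph{deterministic} target $V^*_{h+1}$ (not the data-dependent $\wt{V}_{h+1}$), and invoking monotonicity of $\wt{V}$ to turn the one-sided lazy threshold into the two-sided bound $|\wh{V}_{h+1}-\wt{V}_{h+1}|\le \eps/4H$. Your observation about monotonicity is in fact a small gap-fill: the paper asserts this bound without comment.

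However, there is a genuine gap in the error accounting, and it is exactly the step you dismiss as ``standard $Q$-value iteration error accounting.'' With $N = H^3\log^3(SHT)/\eps^2$, a single Bernstein step on $V^*_{h+1}\in[0,H]$ yields a per-step error of order $\sqrt{\sigma_h^2 \log(SHT)/N}$, and in the worst case $\sigma_h^2 = \Theta(H^2)$, so the per-step error is $\Theta(\eps/\sqrt{H})$, not $O(\eps/H)$. Naively accumulated over $H$ steps this gives $O(\eps\sqrt{H})$, which does not prove the lemma. Your recursion makes this problem structural: by collapsing the propagated error to $\|\wt{V}_{h+1}-V^*_{h+1}\|_\infty$ at every step, you commit to the per-state-action variance bound $\sigma_h^2 \le H^2$, and that bound is tight for individual pairs. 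The paper avoids this by \emph{not} passing to $\ell_\infty$; it unrolls $\E_{\wh P_h}[\wh V_{h+1}-V^*_{h+1}]$ as a trajectory expectation, obtaining a sum $\sum_\ell \wh p(s_\ell)\,\sigma_\ell(s_\ell,\wh\pi(s_\ell))$ weighted by visitation probabilities, which it then controls via Cauchy–Schwarz and the total-variance identity $\sum_\ell \E[\sigma_\ell^2] \le O(H^2)$ (Lemmas \ref{lem:variance-connection}, \ref{lem:upper-empirical}, \ref{lem:upper-variance}). This law-of-total-variance argument buys the extra $\sqrt H$ factor and is precisely what makes $N=H^3/\eps^2$ (rather than $H^4/\eps^2$) sufficient. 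Without it, your argument would need $N$ increased by a factor of $H$, which would worsen Theorem \ref{thm:incremental} to $\wt O(H^6/\eps^3)$. To close the gap, you should replace the $\ell_\infty$-collapse in your Bellman recursion with the trajectory-weighted unrolling and then prove the $O(H^2)$ bound on the accumulated variance along the trajectory.
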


\begin{proof}
We prove the claim by induction. 
The base case of $h = H$ holds trivially, as there is no error.
Suppose the claim holds up to step $h+1$, then for the $h$-th step, we have
\begin{align}
\wh{Q}_{h}(s_h, a_h) = &~ r_h(s_h, a_h) + \E_{s_{h+1}\sim \wh{P}_{h}(s_h, a_h)} \wt{V}_{h+1}(s_{h+1}) \notag \\
= &~ r_h(s_h, a_h) + \E_{s_{h+1}\sim \wh{P}_{h}(s_h, a_h)} \wh{V}_{h+1}(s_{h+1}) \pm \frac{\eps}{4H} \notag \\
= &~ r_h(s_h, a_h) + \E_{s_{h+1}\sim \wh{P}_{h}(s_h, a_h)} [V^{*}_{h+1}(s_{h+1})] \notag \\
&~ + \E_{s_{h+1}\sim \wh{P}_{h}(s_h, a_h)} [\wh{V}_{h+1}(s_{h+1}) - V^{*}_{h+1}(s_{h+1})]  \pm \frac{\eps}{4H}, \label{eq:incremental-value1}
\end{align}
where the first step follows from the update rule of Algorithm \ref{algo:lazy-qvi}, the second step holds since that the propagate value $\wt{V}_{h+1}(s_{h+1})$ satisfies
\[
\left|\wh{V}_{h+1}(s_{h+1}) - \wt{V}_{h+1}(s_{h+1})\right| \leq \frac{\eps}{4H}, \quad \forall s_{h+1} \in \mS_{h+1}.
\]

We bound the second term of Eq.~\eqref{eq:incremental-value1} in terms of variance.
Define
\begin{align*}
\sigma_h(s_h, a_h)^2 := \E_{s_{h+1}\sim P_{h}(s_h, a_h)}[V_{h+1}^{*}(s_{h+1})^2] -  \left(\E_{s_{h+1}\sim P_{h}(s_h, a_h)}[V_{h+1}^{*}(s_{h+1})]\right)^2.
\end{align*}
By Bernstein inequality, we have with probability at least $1 - (SHT)^{-\omega(1)}$,
\begin{align*}
\left|\E_{s_{h+1}\sim \wh{P}_{h}(s_h, a_h)} [V^{*}_{h+1}(s_{h+1})] - \E_{s_{h+1}\sim P_{h}(s_h, a_h)} [V^{*}_{h+1}(s_{h+1})]\right|
\lesssim &~ \frac{H + \sqrt{N}\sigma_h(s_h, a_h)}{N} \cdot \log(SHT) \\
\leq &~ \frac{\eps^2}{H^2} + \frac{\eps}{16H^{3/2}}\cdot\sigma_{h}(s_h, a_h).
\end{align*}

Plugging into Eq.~\eqref{eq:incremental-value1}, we have
\begin{align}
&~ \wh{Q}_{h}(s_h, a_h) \notag \\
= &~ r_h(s_h, a_h) + \E_{s_{h+1}\sim P_{h}(s_h, a_h)}[V^{*}_{h+1}(s_{h+1})] + \E_{s_{h+1}\sim \wh{P}_{h}(s_h, a_h)} [\wh{V}_{h+1}(s_{h+1}) - V^{*}_{h+1}(s_{h+1})] \notag \\
&~ \pm \frac{\eps}{16H^{3/2}}\cdot \sigma_h(s_h, a_h) \pm \frac{\eps}{3H} \notag \\
= &~ Q^{*}_h(s_h, a_h) + \E_{s_{h+1}\sim \wh{P}_{h}(s_h, a_h)} [\wh{V}_{h+1}(s_{h+1}) - V^{*}_{h+1}(s_{h+1})]\pm \frac{\eps}{16H^{3/2}}\cdot \sigma_h(s_h, a_h) \pm \frac{\eps}{3H}.\label{eq:incremental-value2}
\end{align}

We bound the $V$-value difference $\wh{V}_{h}(s_{h}) - V^{*}_{h}(s_{h})$ and provide upper and lower bounds separately.

{\bf Upper bound $\wh{V}_{h}(s_{h}) - V^{*}_{h}(s_{h})$.}
Let $\wh{\pi}$ be the policy induced by $\wh{Q}$, that is, for any state $s_\ell \in \mS_\ell$, $\wh{\pi}(s_\ell) = \mathsf{argmax}_{a_\ell}\wh{Q}_\ell(s_\ell, a_\ell)$.
Then for any state $s_h \in \mS_h$, one has
\begin{align}
&~ \wh{V}_{h}(s_{h}) - V_{h}^{*}(s_{h})\notag \\
= &~ \wh{Q}_{h}(s_{h}, \wh{\pi}(s_h)) - Q^{*}_{h}(s_{h}, \pi^{*}(s_h)) \notag \\
= &~  \wh{Q}_{h}(s_{h}, \wh{\pi}(s_h)) - Q^{*}_{h}(s_{h}, \wh{\pi}(s_h)) + Q^{*}_{h}(s_{h}, \wh{\pi}(s_h)) - Q^{*}_{h}(s_{h}, \pi^{*}(s_h))\notag \\
\leq &~ \wh{Q}_{h}(s_{h}, \wh{\pi}(s_h)) - Q^{*}_{h}(s_{h}, \wh{\pi}(s_h))\notag \\
\leq &~ \E_{s_{h+1}\sim \wh{P}_{h}(s_h, \wh{\pi}(s_h))} [\wh{V}_{h+1}(s_{h+1}) - V^{*}_{h+1}(s_{h+1})] \pm \frac{\eps}{16H^{3/2}}\sigma_h(s_h, \wh{\pi}(s_h)) + \frac{\eps}{3H}, \label{eq:incremental-value3}
\end{align}
where the third step follows from the optimality of $\pi^{*}$, the fourth step follows from Eq.~\eqref{eq:incremental-value2}.

Fix the state $s_h \in \mS_{h}$, for any step $\ell \in [h: H]$ and state $s_{\ell} \in \mS_{\ell}$, let $\wh{p}(s_{\ell})$ be the probability that policy $\hat{\pi}$ goes to state $s_{\ell}$, starting from $s_h$.
Recurring Eq.~\eqref{eq:incremental-value3}, we obtain
\begin{align}
\wh{V}_{h}(s_{h}) - V^{*}_{h}(s_{h}) \leq &~ \frac{\eps}{16H^{3/2}} \cdot \sum_{\ell = h}^{H}\sum_{s_\ell \in \mS_\ell} \wh{p}(s_\ell)\sigma_{\ell}(s_\ell, \wh{\pi}(s_\ell)) + \frac{\eps}{3} \notag \\
\leq &~ \frac{\eps}{16H} \sqrt{\sum_{\ell=h}^{H} \wh{p}(s_\ell) \sigma_{\ell}(s_\ell, \wh{\pi}(s_\ell))^2} + \frac{\eps}{3}. \label{eq:incremental-value4}
\end{align}
Here the first step follows Eq.~\eqref{eq:incremental-value3}, the second step follows from Cauchy Schwarz inequality and $\sum_{s_\ell\in \mS_\ell}\wh{p}(s_\ell) = 1$ holds for any $\ell \geq h$.

We need the following two technical Lemmas.
\begin{lemma}[Connection with empirical variance]
\label{lem:variance-connection}
Define the empirical variance
\begin{align*}
\wh{\sigma}_{h}(s_h, a_h)^{2} := \E_{s_{h+1}\sim \wh{P}_{h}(s_h, a_h)}[\wh{V}_{h+1}(s_{h+1})^2] -  \left(\E_{s_{h+1}\sim \wh{P}_{h}(s_h, a_h)}[\wh{V}_{h+1}(s_{h+1})]\right)^2
\end{align*}
Then with probability at least $1-(SHT)^{-\omega(1)}$, one has
\begin{align*}
|\sigma_h(s_h, a_h)^2 - \wh{\sigma}_h(s_h, a_h)^2| \leq  H.
\end{align*}
\end{lemma}
\begin{proof}
First, by Hoeffding inequality, with probability at least $1-(SHT)^{-\omega(1)}$, one has
\begin{align*}
\left|\E_{s_{h+1}\sim \wh{P}_{h}(s_h, a_h)}[\wh{V}_{h+1}(s_{h+1})^2] - \E_{s_{h+1}\sim P_{h}(s_h, a_h)}[\wh{V}_{h+1}(s_{h+1})^2]\right| \leq \frac{4H^2\sqrt{N}\log(SHT)}{N} \leq  H/4.
\end{align*}
By induction hypothesis, one has $|\wh{V}_{h+1}(s_{h+1}) - V_{h+1}(s_{h+1})| \leq \eps$ for any state $s_{h+1} \in \mS_{h+1}$, and therefore,
\[
\left|\E_{s_{h+1}\sim P_{h}(s_h, a_h)}[\wh{V}_{h+1}(s_{h+1})^2 - V_{h+1}(s_{h+1})^2] \right| \leq 2\eps H \leq H/4
\]
Similarly, by Hoeffding bound, we have with probability at least $1-(SHT)^{-\omega(1)}$, 
\begin{align*}
\left|\E_{s_{h+1}\sim \wh{P}_{h}(s_h, a_h)}[\wh{V}_{h+1}(s_{h+1})] - \E_{s_{h+1}\sim P_{h}(s_h, a_h)}[\wh{V}_{h+1}(s_{h+1})]\right| \leq \frac{4\sqrt{N}H\log(SHT)}{N} \leq \eps
\end{align*}
and by induction hypothesis,
\[
\left|\E_{s_{h+1}\sim P_{h}(s_h, a_h)}[\wh{V}_{h+1}(s_{h+1}) - V_{h+1}(s_{h+1})] \right| \leq \eps 
\]
Combining the above four inequalities, one can conclude the proof.
\end{proof}

\begin{lemma}[Upper bound on empirical variance]
\label{lem:upper-empirical}
We have
\[
\sum_{\ell=h}^{H} \wh{p}(s_\ell) \wh{\sigma}_{\ell}(s_\ell, \wh{\pi}(s_\ell))^2 \leq 3H^2
\]
\end{lemma}
\begin{proof} We have
\begin{align*}
&~ \sum_{\ell=h}^{H}\sum_{s_\ell \in \mS_\ell} \wh{p}(s_\ell) \wh{\sigma}_{\ell}(s_\ell, \wh{\pi}(s_\ell))^2 \\
= &~ \sum_{\ell=h}^{H}\sum_{s_\ell \in \mS_\ell}\wh{p}(s_\ell)  \cdot \left(\E_{s_{\ell+1}\sim \wh{P}_{\ell}(s_\ell, \wh{\pi}(s_\ell))}[\wh{V}_{\ell+1}(s_{\ell+1})^2] -  \left(\E_{s_{\ell+1}\sim \wh{P}_{\ell}(s_\ell, \wh{\pi}(s_\ell))}[\wh{V}_{\ell+1}(s_{\ell+1})]\right)^2\right)\\
\leq &~ \sum_{\ell=h+1}^{H} \sum_{s_\ell \in \mS_\ell} \wh{p}(s_{\ell})\left(\wh{V}_{\ell}(s_{\ell})^2 - 
\left(\E_{s_{\ell+1}\sim \wh{P}(s_\ell, \wh{\pi}(s_\ell))}[\wh{V}_{\ell+1}(s_{\ell+1})]\right)^2  \right)  + 1\\
= &~ \sum_{\ell=h+1}^{H} \sum_{s_\ell \in \mS_\ell} \wh{p}(s_{\ell})\left(\left(\E_{s_{\ell+1}\sim \wh{P}(s_\ell, \wh{\pi}(s_\ell))}[\wt{V}_{\ell+1}(s_{\ell+1}) + r_\ell(s_\ell, \wh{\pi}(s_{\ell}))]\right)^2 - 
\left(\E_{s_{\ell+1}\sim \wh{P}(s_\ell, \wh{\pi}(s_\ell))}[\wh{V}_{\ell+1}(s_{\ell+1})]\right)^2  \right) + 1\\
\leq &~ \sum_{\ell=h+1}^{H}\sum_{s_\ell \in \mS_\ell}\wh{p}(s_\ell) \cdot 2H \cdot (1+\eps/H) + 1 \leq 3H^2.
\end{align*}
The first step follows from the definition of empirical variance $\wh{\sigma}_\ell$. The second step is important and it holds due to the definition of visiting probability $\wh{p}_\ell$, and we use the naive bound of  $\wh{V}_{H}(s_{H}) \leq 1$ for any state $s_{H} \in \mS_H$ in last step. The third step holds due to the definition of $\wh{V}_\ell(s_\ell)$. The last step holds due to
\[
\left|\E_{s_{\ell+1}\sim \wh{P}(s_\ell, \wh{\pi}(s_\ell))} \wt{V}_{\ell+1}(s_{\ell+1}) + r_\ell(s_\ell, \wh{\pi}(s_{\ell})) - \wh{V}_{\ell+1}(s_{\ell+1})\right| \leq 1 + \eps/H
\]
as $|\wt{V}_{\ell+1}(s_{\ell+1}) - \wh{V}_{\ell+1}(s_{\ell+1})| \leq \eps/H$ and $r_\ell(s_\ell, \wh{\pi}(s_{\ell})) \leq 1$.
\end{proof}

Combining Lemma \ref{lem:variance-connection}, Lemma \ref{lem:upper-empirical} and Eq.~\eqref{eq:incremental-value4}, we have that
\begin{align}
\wh{V}_{h}(s_{h}) - V^{*}_{h}(s_{h})  \leq &~ \frac{\eps}{16H} \sqrt{\sum_{\ell=h}^{H} \wh{p}(s_\ell) \sigma_{\ell}(s_\ell, \wh{\pi}(s_\ell))^2} + \frac{\eps}{3} \notag \\
=&~ \frac{\eps}{16H} \sqrt{\sum_{\ell=h}^{H} \wh{p}(s_\ell)\wh{\sigma}_{\ell}(s_\ell, \wh{\pi}(s_\ell))^2 + H^2} + \frac{\eps}{3} \notag \\
\leq &~  \frac{\eps}{16H}\cdot \sqrt{3H^2 + H^2} + \frac{\eps}{3} \leq \frac{\eps}{2} \label{eq:incremental-v-upper}
\end{align}

{\bf Lower bound $\wh{V}_{h}(s_{h}) - V^{*}_{h}(s_{h})$.} The proof for lower bound is similar. First, we have
\begin{align}
&~ V^{*}_{h}(s_{h}) - \wh{V}_{h}(s_{h})\notag\\
= &~ Q^{*}_{h}(s_{h}, \pi^{*}(s_h)) - \wh{Q}_{h}(s_{h}, \wh{\pi}(s_h)) \notag \\
= &~ Q^{*}_{h}(s_{h}, \pi^{*}(s_h)) - \wh{Q}_h(s_{h}, \pi^{*}(s_h))  + \wh{Q}_h(s_{h}, \pi^{*}(s_h)) - \wh{Q}_{h}(s_{h}, \wh{\pi}(s_h))\notag \\
\leq &~ Q^{*}_{h}(s_{h}, \pi^{*}(s_h)) - \wh{Q}_h(s_{h}, \pi^{*}(s_h))\notag \\
\leq &~ \E_{s_{h+1}\sim \wh{P}_{h}(s_h, \pi^{*}(s_h))} [V_{h+1}^{*}(s_{h+1}) - \wh{V}_{h+1}(s_{h+1})] + \frac{\eps}{16H^{3/2}}\sigma_h(s_h, \pi^{*}(s_h)) + \frac{\eps}{3H}, \label{eq:incremental-value5}
\end{align}
where the third step follows from the optimality of $\wh{\pi}$, the fourth step follows from Eq.~\eqref{eq:incremental-value2}.

Using Hoeffding bound and the induction hypothesis $|V_{h+1}^{*}(s_{h+1}) - \wh{V}_{h+1}(s_{h+1})| \leq \eps$, with probability at least $1 - (SHT)^{-\omega(1)}$, we have
\begin{align*}
&~ \E_{s_{h+1}\sim \wh{P}_{h}(s_h, \pi^{*}(s_h))} [V_{h+1}^{*}(s_{h+1}) - \wh{V}_{h+1}(s_{h+1})] - \E_{s_{h+1}\sim P_{h}(s_h, \pi^*(s_h))} [V_{h+1}^{*}(s_{h+1}) - \wh{V}_{h+1}(s_{h+1})]  \\
\leq &~ \frac{2\eps\sqrt{N}\log(SHT)}{N} \leq \frac{\eps}{24H}. 
\end{align*}

Plug into Eq.~\eqref{eq:incremental-value5}, we have
\begin{align*}
V^{*}_{h}(s_{h}) - \wh{V}_{h}(s_{h}) \leq &~ \E_{s_{h+1}\sim P_{h}(s_h, \pi^*(s_h))} [V_{h+1}^{*}(s_{h+1}) - \wh{V}_{h+1}(s_{h+1})]\\
&~+ \frac{\eps}{16H^{3/2}}\sigma_h(s_h, \pi^{*}(s_h)) + \frac{3\eps}{8H}.
\end{align*}

Fix the state $s_h \in \mS_{h}$, for any step $\ell \in [h: H]$ and state $s_{\ell} \in \mS_{\ell}$, let $p^{*}(s_{\ell})$ be the probability that policy $\pi^{*}$ goes to state $s_{\ell}$, starting from $s_h$.
Recurring the above equation, we obtain
\begin{align}
V^{*}_{h}(s_{h}) - \wh{V}_{h}(s_{h})  \leq &~ \frac{\eps}{16H^{3/2}} \cdot \sum_{\ell = h}^{H}\sum_{s_\ell \in \mS_\ell} p^{*}(s_\ell)\sigma_{\ell}(s_\ell, \pi^{*}(s_\ell)) + \frac{3\eps}{8} \notag \\
\leq &~ \frac{\eps}{16H} \sqrt{\sum_{\ell=h}^{H} p^{*}(s_\ell) \sigma_{\ell}(s_\ell, \pi^{*}(s_\ell))^2} + \frac{3\eps}{8}. \notag \\
\leq &~ \frac{\eps}{16H} \sqrt{3H^2} + \frac{\eps}{3}\leq \eps/2. \label{eq:incremental-v-lower}
\end{align}
We use Cauchy Schwarz in the second step, the third step follows from the following Lemma (the proof is similar to Lemma \ref{lem:upper-empirical} and we omit it here).
\begin{lemma}[Upper bound on variance]
\label{lem:upper-variance}
We have
\[
\sum_{\ell=h}^{H} p^{*}(s_\ell) \sigma_{\ell}(s_\ell, \pi^{*}(s_\ell))^2 \leq 3H^2.
\]
\end{lemma}

Combining Eq.~\eqref{eq:incremental-v-upper} and Eq.~\eqref{eq:incremental-v-lower}, we conclude the proof for $V$-value. 

For $Q$-value, we have with probability at least $1- (SHT)^{-\omega(1)}$,
\begin{align*}
\wh{Q}_{h}(s_h, a_h) = &~ r_h(s_h, a_h) + \E_{s_{h+1}\sim \wh{P}_{h}(s_h, a_h)} \wt{V}_{h+1}(s_{h+1}) \notag \\
= &~ r_h(s_h, a_h) + \E_{s_{h+1}\sim \wh{P}_{h}(s_h, a_h)} \wh{V}_{h+1}(s_{h+1}) \pm \frac{\eps}{4H} \notag \\
= &~ r_h(s_h, a_h) + \E_{s_{h+1}\sim \wh{P}_{h}(s_h, a_h)} V^{*}_{h+1}(s_{h+1}) \pm \eps/2 \pm \frac{\eps}{4H}  \notag \\
= &~ r_h(s_h, a_h) + \E_{s_{h+1}\sim P_{h}(s_h, a_h)} V^{*}_{h+1}(s_{h+1}) \pm \eps\\
= &~ Q^*_{h}(s_h, a_h)\pm \eps.
\end{align*}
The first step uses the update rule of Algorithm \ref{algo:lazy-qvi}, the second step holds since $|\wt{V}_{h+1}(s_{h+1}) - \wh{V}_{h+1}(s_{h+1})| \leq \eps/4H$. The third follows from the guarantee of $V$-value, and the fourth step follows from Hoeffding bounds and the last step follows from Bellman equation. We finish the induction and complete the proof here.
\end{proof}

We next bound the total update time of Algorithm \ref{algo:lazy-qvi}.
\begin{lemma}[Total update time]
\label{lem:incremental-amortize}
The total update time of Algorithm \ref{algo:lazy-qvi} is at most $\wt{O}(TH^5/\eps^3)$ over a sequence of $T$ action insertions.
\end{lemma}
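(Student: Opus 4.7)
The plan is to combine three ingredients: monotonicity of all maintained quantities, a hard cap on how many times any lazy value can change coming from the $\eps/4H$ gap condition, and an incremental implementation that processes each $\wh Q$-update in amortized $O(1)$ time.

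First I would show, by backward induction on $h$, that throughout the entire sequence of $T$ insertions the three quantities $\wt V_h(s_h)$, $\wh V_h(s_h)$, and $\wh Q_h(s_h, a_h)$ are monotonically non-decreasing in time. Since actions are only added, $\wh V_h = \max_{a_h} \wh Q_h$ can only grow when a new $(s_h, a_h)$ is inserted at $s_h$; and because $\wh Q_h(s_h, a_h) = r_h(s_h, a_h) + \E_{s_{h+1}\sim \wh P_h(s_h, a_h)}[\wt V_{h+1}(s_{h+1})]$ is an affine non-decreasing function of $\wt V_{h+1}$, monotonicity of $\wt V_{h+1}$ propagates upward to $\wh Q_h$, then to $\wh V_h$, and finally to $\wt V_h$, which is only ever overwritten by the algorithm's guarded rule when the new value is strictly larger.

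Given monotonicity, the guard $\wh V_h(s_h) \geq \wt V_h(s_h) + \eps/4H$ together with the a priori bound $\wt V_h(s_h) \in [0, H]$ forces each state's $\wt V_h(s_h)$ to be overwritten at most $4H^2/\eps$ times across the whole execution. Because $\wh P_h(s_h, a_h)$ is the uniform distribution over $N = \wt O(H^3/\eps^2)$ i.i.d.\ samples, its support has size at most $N$, so $\wh Q_h(s_h, a_h)$ changes only when one of the at most $N$ values $\wt V_{h+1}(s_{h+1})$ in its support changes. Combining these two bounds, $\wh Q_h(s_h, a_h)$ is modified at most $N \cdot (4H^2/\eps) = \wt O(H^5/\eps^3)$ times; summed across the at most $T$ state-action pairs ever inserted, the total number of $\wh Q$-modifications is $\wt O(T H^5/\eps^3)$.

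The remaining step is to ensure each modification is processed in amortized $O(1)$. I would store every empirical transition compactly as a list of (next-state, multiplicity) pairs, together with back-pointers from every $s_{h+1}$ to the $(s_h, a_h)$ that have $s_{h+1}$ in their support. When $\wt V_{h+1}(s_{h+1})$ increases by $\delta$, walking the back-pointers and adding $\delta \cdot (\text{multiplicity of }s_{h+1})/N$ to each affected $\wh Q_h(s_h, a_h)$ costs $O(1)$ per edge touched. Monotonicity of $\wh Q$ further lets $\wh V_h(s_h) = \max_{a_h} \wh Q_h$ be maintained by a running maximum in $O(1)$ per $\wh Q$-change. The per-insertion sampling and back-pointer construction add $O(N)$ per update, totalling $\wt O(T H^3/\eps^2)$, which is absorbed into $\wt O(T H^5/\eps^3)$.

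The main obstacle, and where care is needed, is the bookkeeping that turns the global bound on $\wt V$-updates into an $O(1)$-per-trigger cost: one has to verify that the gap rule of $\eps/4H$ is indeed enforced globally across all Propagate calls (not reset between insertions), and that the incremental back-pointer update does not hide an extra $N$ or $\log$ factor. The monotonicity chain established in the first step is precisely what makes both of these accounting steps go through cleanly.
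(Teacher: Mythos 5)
Your proof is correct and follows the same counting argument as the paper: each $\wt{V}_h(s_h)$ is overwritten at most $4H^2/\eps$ times because of the $\eps/4H$ gap rule together with the range $[0,H]$, each $\wh{Q}_h(s_h,a_h)$ therefore changes at most $N \cdot 4H^2/\eps = \wt{O}(H^5/\eps^3)$ times since its support has size at most $N$, and there are at most $T$ state-action pairs in total. The explicit monotonicity argument and the back-pointer data structure giving $O(1)$ amortized cost per $\wh{Q}$-modification are details the paper leaves implicit behind the remark ``Update only if there is a change,'' so your write-up is, if anything, more careful than the paper's own terse proof.
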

\begin{proof}
For each new action $(s_h, a_h)$, the construction of $\wh{P}_{h}(s_h, a_h)$ takes $\wt{O}(N) = \wt{O}(H^3/\eps^2)$ time. The major overhead comes from the {\sc Propagate} part.
First, note the propagated $V$-value $\wt{V}_h(s_h)$ of a state $s_h$ can change at most $H /(\eps/4H) = O(H^2/\eps)$ times.
Next, for each state-action pair $(s_h, a_h)$, the $Q$-value $\wh{Q}_h(s_h, a_h) = r_{h}(s_h, a_h) + \E_{s_{h+1}\sim \wh{P}_{h}(s_h, a_h)} \wt{V}_{h+1}(s_{h+1})$  can change at most $O(N \cdot H^2/\eps) = \wt{O}(H^5/\eps^3)$ times, because the support of $\wh{P}_{h}(s_h, a_h)$ has size at most $N$, and each estimate $\wt{V}_{h+1}(s_{h+1})$ changes at $O(H^2/\eps)$ times as stated above.
The total number of state-action pair is bounded by $T$. We conclude the proof.
\end{proof}

The proof of Theorem \ref{thm:incremental} follows directly from Lemma \ref{lem:incremental-value-approx} and Lemma \ref{lem:incremental-amortize}.

We next prove the lower bound.

\begin{proof}[Proof of Theorem \ref{thm:incremental-lower}]
Let $n = T^{1-o(1)}$, $m = n^{o(1)}$. We reduce from $\mip$ with sets $B_1,\ldots, B_n$ and $C_1, \ldots, C_n$ defined over ground element $[m]$.
The MDP contains $H = 3$ steps. There is one single initial state $s_1$ at the first step $h= 1$. 
In the second step ($h=2$), there are $m = n^{o(1)}$ states $s_{2,1}, \ldots, s_{2,m}$,
and at the last step ($h=3$), there are two states $s_{3,1}$, $s_{3,2}$.

The sequence of new actions is as follow.
There is one action $a_{3}$ for the last step, and the reward satisfies $r_3(s_{3,1}, a_3) = 1$ and $r_3(s_{3,1}, a_3) = 0$, i.e., the reward is $1$ for $s_{3,1}$ and $0$ for $s_{3,2}$. 
There are $n$ actions $a_{1,1}, \ldots, a_{1, n}$ for the initial state at the first step, and we have
\[
P_{1}(s_{1}, a_{1, i}) = \unif(\{s_{2, k}: k \in B_{i}\}) \quad \text{and} \quad  r_{1}(s_{1}, a_{1, i}) = 0 \quad \forall i \in [n].
\]
The rest sequence divides into $n$ epochs, and in the $j$-th epoch ($j\in [n]$), there is one new action $a_{2, j}$ for each state $\{s_{2, k}\}_{k\in [m]}$.
Let $\delta = 1/4n$. At the end of $j$-th epoch, $t(j) \in [T]$, the transition and the reward of the new action $a_{2, j}$ satisfies
\[
P_{2}(s_{k}, a_{2, j}) = \left\{
\begin{matrix}
(\frac{j}{n+1} + \delta, 1 - \frac{j}{n+1} - \delta) & k \in C_j \\
(\frac{j}{n+1}, 1 - \frac{j}{n+1}) & k \notin C_j
\end{matrix}\right. \quad \text{and} \quad r_{2}(s_{k}, a_{2, j}) = 0. 
\]
In summary, the total number of state-action pairs at the end is $2 + n + mn = n^{1+o(1)} = T$.

First, a simple observation on the value function 
\begin{lemma}
At the end of epoch $j \in [n]$, the optimal policy satisfies
\begin{itemize}
\item $V^{*}_{t(j)}(s_{3,1})=1$ and $V^{*}_{t(j)}(s_{3,1})=0$
\item $V^{*}_{t(j)}(s_{2,k})= \frac{j}{n+1} + \delta$ when $k \in C_{j}$ and $V^{*}_{t(j)}(s_{2,k})= \frac{j}{n+1}$ otherwise
\item $V^{*}_{t(j)}(s_{1}) = \frac{j}{n+1} + \kappa_j\cdot \delta $, where $\kappa_j = \max_{i \in [n]}\frac{|C_j\cap B_i|}{b}$
\end{itemize}
\end{lemma}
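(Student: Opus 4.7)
The plan is to apply Bellman's optimality equation in a backward sweep over $h = 3, 2, 1$; since $H = 3$ and only a handful of states live at each step, every $V^{*}$-value can be written in closed form. At step $h = 3$ only the action $a_3$ exists, and its deterministic reward yields $V^{*}_{t(j)}(s_{3,1}) = 1$ and $V^{*}_{t(j)}(s_{3,2}) = 0$ immediately.

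For step $h = 2$, after the $j$-th epoch the actions available at each state $s_{2,k}$ are exactly $a_{2,1},\ldots,a_{2,j}$. For any $j' \in [j]$, combining Bellman's equation with the step-$3$ values gives
\begin{equation*}
Q^{*}_{t(j)}(s_{2,k}, a_{2,j'}) \;=\; \frac{j'}{n+1} + \delta \cdot \mathbf{1}[k \in C_{j'}].
\end{equation*}
The key point, and the only non-routine step of the argument, is that the choice $\delta = 1/(4n) < 1/(n+1)$ ensures the $\delta$-bonus can never compensate for even a unit decrease in $j'$. Hence the optimum over $j' \in [j]$ is attained at $j' = j$, yielding $V^{*}_{t(j)}(s_{2,k}) = \frac{j}{n+1} + \delta$ when $k \in C_j$ and $\frac{j}{n+1}$ otherwise.

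For step $h = 1$, the only available actions are $a_{1,1}, \ldots, a_{1,n}$, and the transition $P_1(s_1, a_{1,i})$ is uniform over $\{s_{2,k} : k \in B_i\}$. Averaging the step-$2$ $V$-values over $B_i$ splits the sum according to whether $k \in C_j$:
\begin{equation*}
Q^{*}_{t(j)}(s_1, a_{1,i}) \;=\; \frac{1}{b}\sum_{k \in B_i}V^{*}_{t(j)}(s_{2,k}) \;=\; \frac{j}{n+1} + \delta \cdot \frac{|C_j \cap B_i|}{b}.
\end{equation*}
Maximizing over $i \in [n]$ immediately gives $V^{*}_{t(j)}(s_1) = \frac{j}{n+1} + \kappa_j \cdot \delta$, where $\kappa_j = \max_i |C_j \cap B_i|/b$, as claimed. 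The rest of the computation is a direct plug-in of Bellman's equation; no concentration or approximation arguments are required, since every transition distribution is exactly known.
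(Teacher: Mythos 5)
Your proof is correct and follows the same backward Bellman sweep that the paper uses. You are slightly more explicit than the paper at step $h=2$, where you spell out why $\delta = 1/(4n) < 1/(n+1)$ makes $a_{2,j}$ dominate all earlier actions $a_{2,j'}$ (the paper simply asserts that $a_{2,j}$ is optimal); otherwise the arguments are identical.
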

\begin{proof}
The first claim is trivial. 
The second claim holds since the action $a_{2, j}$ is the optimal choice by the end of epoch $j$, and its value equals $Q^{*}_{t(j)}(s_{2,k}, a_{j}) = \frac{j}{n+1} + \delta$ when $k \in C_j$ and $Q^{*}_{t(j)}(s_{2,k}, a_{j}) = \frac{j}{n+1}$ when $k\notin C_j$.
For the last claim, for any $i \in [n]$, we have 
\begin{align*}
Q^{*}_{t(j)}(s_{1}, a_{1, i}) = &~ \sum_{k \in [m]} \Pr[s_{2} = s_{2, k}] \cdot V^{*}_{t(j)}(s_{2,k}) \\
= &~ \sum_{k \in C_{j}} \Pr[s_{2} = s_{2, k}] \cdot V^{*}_{t(j)}(s_{2,k}) + \sum_{k \in [m]\backslash C_{j}} \Pr[s_{2} = s_{2, k}] \cdot V^{*}_{t(j)}(s_{2,k}) \\
= &~ \frac{|B_{i} \cap C_{j}|}{b} \cdot \left(\frac{j}{n+1} +\delta\right)  + \left(1- \frac{|B_{i} \cap C_{j}|}{b}\right)\cdot \frac{j}{n+1} \\
= &~ \frac{j}{n+1} + \delta \cdot \frac{|B_{i} \cap C_{j}|}{b}.
\end{align*}
Taking maximum over $i \in [n]$, we have
$V^{*}_{t(j)}(s_{1}) = \frac{j}{n+1} + \kappa_j\cdot \delta$.
\end{proof}

Hence, any algorithm that returns $\delta / b = O(1/mn) = O(1/T)$ approximation to optimal $V$-value could distinguish between YES/NO instance of $\mip$, and therefore, assuming $\SETH$ is true, there is no algorithm with $n^{2-o(1)}/T = T^{1-o(1)}$ amortized runtime per update.
\end{proof}

\end{document}